\newcolumntype{P}[1]{>{\centering\arraybackslash}p{#1}} %
\theoremstyle{plain} %
\newtheorem{lemma}{Lemma}[section]
\theoremstyle{definition} %
\newtheorem{definition}{Definition}[section]
\newtheorem{assumption}{Assumption}[section]
\newtheorem{counterexample}{Counterexample}[section]
\theoremstyle{remark} %
\newtheorem{remark}{Remark}[section]
\DeclareDocumentCommand{\hcancel}{mO{0pt}O{0pt}O{0pt}O{0pt}}{%
    \tikz[baseline=(tocancel.base)]{
        \node[inner sep=0pt,outer sep=0pt] (tocancel) {#1};
        \draw[red] ($(tocancel.west)+(#2,#3)$) -- ($(tocancel.east)+(#4,#5)$);
    }%
}%
\title{SDP4Bit: Toward 4-bit Communication Quantization in Sharded Data Parallelism for LLM Training}
\author{%
  Jinda Jia\thanks{Equal Contribution.} \\
  Indiana University \\
  \texttt{jindjia@iu.edu} \\
  \And
  Cong Xie\footnotemark[1] \\
  ByteDance Inc.\\
  \texttt{cong.xie@bytedance.com} \\
  \And
  Hanlin Lu \\
  ByteDance Inc.\\
  \texttt{hanlin.lu@bytedance.com} \\
  \And
  Daoce Wang \\
  Indiana University \\
  \texttt{daocwang@iu.edu} \\
  \And
  Hao Feng \\
  Indiana University \\
  \texttt{haofeng@iu.edu} \\
  \And
  Chengming Zhang \\
  University of Houston \\
  \texttt{czhang59@cougarnet.uh.edu} \\
  \And
  Baixi Sun \\
  Indiana University \\
  \texttt{sunbaix@iu.edu} \\
  \And
  Haibin Lin \\
  ByteDance Inc.\\
  \texttt{haibin.lin@bytedance.com} \\
  \And
  Zhi Zhang \\
  ByteDance Inc.\\
  \texttt{zhangzhi.joshua@bytedance.com} \\
  \And
  Xin Liu \\
  ByteDance Inc.\\
  \texttt{liuxin.ai@bytedance.com} \\
  \And
  Dingwen Tao \\
  Indiana University \\
  \texttt{ditao@iu.edu} \\
}
\begin{document}

\maketitle

\def\Blue{\color{blue}}
\def\Purple{\color{purple}}

\def\A{{\bf A}}
\def\a{{\bf a}}
\def\B{{\bf B}}
\def\C{{\bf C}}
\def\c{{\bf c}}
\def\D{{\bf D}}
\def\d{{\bf d}}
\def\F{{\bf F}}
\def\e{{\bf e}}
\def\f{{\bf f}}
\def\G{{\bf G}}
\def\H{{\bf H}}
\def\I{{\bf I}}
\def\K{{\bf K}}
\def\M{{\bf M}}
\def\m{{\bf m}}
\def\N{{\bf N}}
\def\n{{\bf n}}
\def\Q{{\bf Q}}
\def\q{{\bf q}}
\def\S{{\bf S}}
\def\s{{\bf s}}
\def\T{{\bf T}}
\def\U{{\bf U}}
\def\u{{\bf u}}
\def\V{{\bf V}}
\def\v{{\bf v}}
\def\W{{\bf W}}
\def\w{{\bf w}}
\def\X{{\bf X}}
\def\x{{\bf x}}
\def\Y{{\bf Y}}
\def\y{{\bf y}}
\def\Z{{\bf Z}}
\def\z{{\bf z}}
\def\0{{\bf 0}}
\def\1{{\bf 1}}

\def\AM{{\mathcal A}}
\def\CM{{\mathcal C}}
\def\DM{{\mathcal D}}
\def\GM{{\mathcal G}}
\def\FM{{\mathcal F}}
\def\IM{{\mathcal I}}
\def\NM{{\mathcal N}}
\def\OM{{\mathcal O}}
\def\SM{{\mathcal S}}
\def\TM{{\mathcal T}}
\def\UM{{\mathcal U}}
\def\XM{{\mathcal X}}
\def\YM{{\mathcal Y}}
\def\RB{{\mathbb R}}

\def\TX{\tilde{\bf X}}
\def\tx{\tilde{\bf x}}
\def\ty{\tilde{\bf y}}
\def\TZ{\tilde{\bf Z}}
\def\tz{\tilde{\bf z}}
\def\hd{\hat{d}}
\def\HD{\hat{\bf D}}
\def\hx{\hat{\bf x}}
\def\TD{\tilde{\Delta}}

\def\alp{\mbox{\boldmath$\alpha$\unboldmath}}
\def\bet{\mbox{\boldmath$\beta$\unboldmath}}
\def\epsi{\mbox{\boldmath$\epsilon$\unboldmath}}
\def\etab{\mbox{\boldmath$\eta$\unboldmath}}
\def\ph{\mbox{\boldmath$\phi$\unboldmath}}
\def\pii{\mbox{\boldmath$\pi$\unboldmath}}
\def\Ph{\mbox{\boldmath$\Phi$\unboldmath}}
\def\Ps{\mbox{\boldmath$\Psi$\unboldmath}}
\def\tha{\mbox{\boldmath$\theta$\unboldmath}}
\def\Tha{\mbox{\boldmath$\Theta$\unboldmath}}
\def\muu{\mbox{\boldmath$\mu$\unboldmath}}
\def\Si{\mbox{\boldmath$\Sigma$\unboldmath}}
\def\si{\mbox{\boldmath$\sigma$\unboldmath}}
\def\Gam{\mbox{\boldmath$\Gamma$\unboldmath}}
\def\Lam{\mbox{\boldmath$\Lambda$\unboldmath}}
\def\De{\mbox{\boldmath$\Delta$\unboldmath}}
\def\Ome{\mbox{\boldmath$\Omega$\unboldmath}}
\def\TOme{\mbox{\boldmath$\hat{\Omega}$\unboldmath}}
\def\vps{\mbox{\boldmath$\varepsilon$\unboldmath}}
\newcommand{\ti}[1]{\tilde{#1}}
\def\Ncal{\mathcal{N}}
\def\argmax{\mathop{\rm argmax}}
\def\argmin{\mathop{\rm argmin}}
\providecommand{\abs}[1]{\lvert#1\rvert}
\providecommand{\norm}[2]{\lVert#1\rVert_{#2}}

\def\Zs{{\Z_{\mathrm{S}}}}
\def\Zl{{\Z_{\mathrm{L}}}}
\def\Yr{{\Y_{\mathrm{R}}}}
\def\Yg{{\Y_{\mathrm{G}}}}
\def\Yb{{\Y_{\mathrm{B}}}}
\def\Ar{{\A_{\mathrm{R}}}}
\def\Ag{{\A_{\mathrm{G}}}}
\def\Ab{{\A_{\mathrm{B}}}}
\def\As{{\A_{\mathrm{S}}}}
\def\Asr{{\A_{\mathrm{S}_{\mathrm{R}}}}}
\def\Asg{{\A_{\mathrm{S}_{\mathrm{G}}}}}
\def\Asb{{\A_{\mathrm{S}_{\mathrm{B}}}}}
\def\Or{{\Ome_{\mathrm{R}}}}
\def\Og{{\Ome_{\mathrm{G}}}}
\def\Ob{{\Ome_{\mathrm{B}}}}

\def\vec{\mathrm{vec}}
\def\fold{\mathrm{fold}}
\def\index{\mathrm{index}}
\def\sgn{\mathrm{sgn}}
\def\tr{\mathrm{tr}}
\def\rk{\mathrm{rank}}
\def\diag{\mathsf{diag}}
\def\const{\mathrm{Const}}
\def\dg{\mathsf{dg}}
\def\vect{\mathsf{vec}}
\def\MCAR{\mathrm{MCAR}}
\def\MSAR{\mathrm{MSAR}}
\def\etal{{\em et al.\/}\,}
\newcommand{\indep}{{\;\bot\!\!\!\!\!\!\bot\;}}

\def\Lsize{\hbox{\space \raise-2mm\hbox{$\textstyle \L \atop \scriptstyle {m\times 3n}$} \space}}
\def\Ssize{\hbox{\space \raise-2mm\hbox{$\textstyle \S \atop \scriptstyle {m\times 3n}$} \space}}
\def\Osize{\hbox{\space \raise-2mm\hbox{$\textstyle \Ome \atop \scriptstyle {m\times 3n}$} \space}}
\def\Tsize{\hbox{\space \raise-2mm\hbox{$\textstyle \T \atop \scriptstyle {3n\times n}$} \space}}
\def\Bsize{\hbox{\space \raise-2mm\hbox{$\textstyle \B \atop \scriptstyle {m\times n}$} \space}}

\newcommand{\twopartdef}[4]
{
	\left\{
		\begin{array}{ll}
			#1 & \mbox{if } #2 \\
			#3 & \mbox{if } #4
		\end{array}
	\right.
}

\newcommand{\tabincell}[2]{\begin{tabular}{@{}#1@{}}#2\end{tabular}}

\def\E{{\mathbb E}}
\def\R{{\mathbb R}}

\DeclarePairedDelimiter\ceil{\lceil}{\rceil}
\DeclarePairedDelimiter\floor{\lfloor}{\rfloor}

\newcommand{\ip}[2]{\left\langle #1, #2 \right \rangle}

\newcommand{\NewProcedure}[1]{\Statex\hspace{-\algorithmicindent} {\large\underline{\textbf{{#1}:}}} \setcounter{ALG@line}{0}}

\newcommand{\mathcomment}[1]{%
 \tag*{$\triangleright$ #1}
}

\newcommand{\mcir}[1]{{\mbox{\large \textcircled{\small #1}}}}

\begin{abstract}

Recent years have witnessed a clear trend towards language models with an ever-increasing number of parameters, as well as the growing training overhead and memory usage. Distributed training, particularly through Sharded Data Parallelism (ShardedDP) which partitions optimizer states among workers, has emerged as a crucial technique to mitigate training time and memory usage. Yet, a major challenge in the scalability of ShardedDP is the intensive communication of weights and gradients. While compression techniques can alleviate this issue, they often result in worse accuracy. Driven by this limitation, we propose \textbf{SDP4Bit} (Toward \underline{4Bit} Communication Quantization in \underline{S}harded \underline{D}ata \underline{P}arallelism for LLM Training), which effectively reduces the communication of weights and gradients to nearly 4 bits via two novel techniques: quantization on weight differences, and two-level gradient smooth quantization. Furthermore, SDP4Bit presents an algorithm-system co-design with runtime optimization to minimize the computation overhead of compression. 
In addition to the theoretical guarantees of convergence, we empirically evaluate the accuracy of SDP4Bit on the pre-training of GPT models with up to 6.7 billion parameters, and the results demonstrate a negligible impact on training loss. Furthermore, speed experiments show that SDP4Bit achieves up to 4.08$\times$ speedup in end-to-end throughput on a scale of 128 GPUs.

\end{abstract}

\section{Introduction}

\begin{wrapfigure}{R}{0.50\textwidth}
    \vspace*{-0.4cm}
    \includegraphics[width=0.50\textwidth]{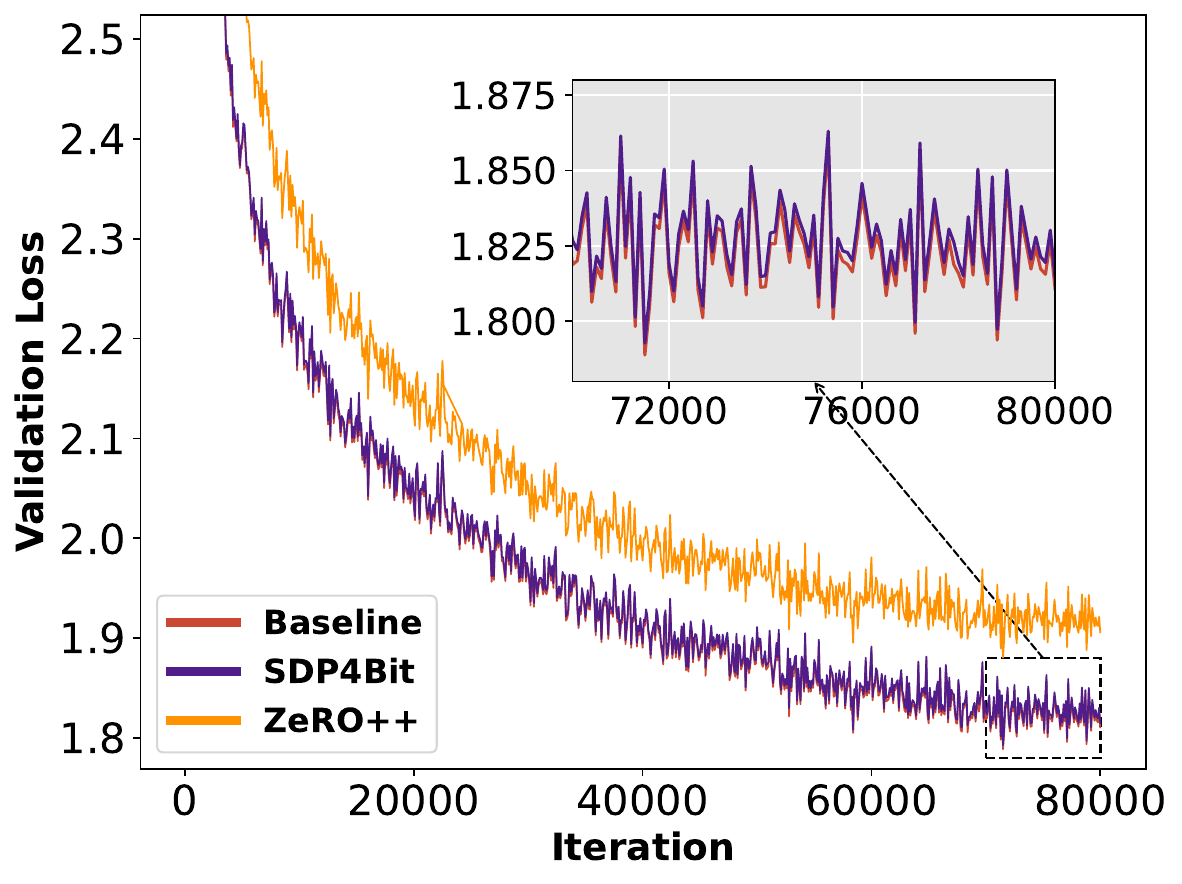}
    \caption{Training validation loss for GPT-6.7B; SDP4Bit is closely aligned with full precision training.}
    \label{fig:gpt_6_7B}
    \vspace*{-0.3cm}
\end{wrapfigure}

Large Language Models (LLMs) are increasingly utilized across various applications, leading to a trend toward larger model sizes. This expansion in model size significantly escalates training overheads, making the process more costly and resource-intensive.
To mitigate the time-consuming nature of training LLMs, it is common to employ multiple GPUs in a data-parallel configuration. However, naive Data Parallelism (DP) necessitates that each GPU replicates the entire optimizer states, a strategy often impractical due to the limited memory capacity of individual GPUs.
This limitation becomes particularly critical with the substantial size of modern LLMs.

Sharded Data Parallelism (ShardedDP) evolves from naive DP to reduce the memory footprint by sharding optimizer states among GPUs. However, the sharding mechanism significantly changes the communication pattern of DP, which brings up new challenges in system optimization. As a result, ShardedDP suffers from heavy communication overheads of both weights and gradients, particularly when inter-node bandwidth is limited. This can significantly increase the end-to-end (E2E) training time, especially when using a small gradient accumulation step. 

Quantization is a widely used strategy to reduce the communication overhead of naive DP, albeit with some accuracy loss. Unfortunately, few prior studies have specifically addressed the issue of communication reduction in ShardedDP. 
Recently, QSDP~\citep{Markov2023QuantizedDT} and ZeRO++~\citep{wang2023zero} attempted to quantize the communication of ShardedDP to Int4. However, when pushing the communication ratio to its limits, both QSDP and ZeRO++ fail to maintain comparable training loss to the baseline. Furthermore, ZeRO++ lacks theoretical convergence guarantees, and QSDP is limited to one specific quantizer called ``random shift'' and strong assumptions.
Thus, \textit{there is no effective solution to reduce ShardedDP's communication to nearly 4 bits without compromising the training loss}.

To address these issues, this paper proposes a novel communication reduction strategy, \textbf{SDP4Bit}. SDP4Bit comprises two main techniques: (1) \textbf{Quantization on Weight Differences}: Instead of directly quantizing weights, we apply 4-bit quantization to compress the weight differences between current and previous iterations; (2) \textbf{Two-Level Gradient Smooth Quantization}: We apply 8-bit quantization to intra-node gradients and 4-bit quantization to inter-node gradients, with Hadamard Transform for smoothing the outliers.
To the best of our knowledge, SDP4Bit is the first work to successfully reduce both gradients and weights to nearly 4 bits without compromising training accuracy. As shown in Figure~\ref{fig:gpt_6_7B}, the training validation loss for GPT-6.7B using SDP4Bit is closely aligned with full precision training. Our main contributions are summarized as follows:

\begin{itemize}[noitemsep, nolistsep, labelindent=0pt, leftmargin=*]
    \item We propose a low-bit (i.e., nearly 4-bit) communication reduction strategy for ShardedDP that preserves E2E training accuracy.
    \item We establish a convergence guarantee for the proposed strategy, showing the same convergence rate as the ordinary Stochastic Gradient Descent~(SGD), with extended choices of biased compressors and weaker assumptions compared to the previous theoretical results.
    \item We implement our method within the Megatron-LM framework and enhance it with runtime optimizations such as buffer reuse, operation pruning, and kernel fusion.
    \item Our results validate that SDP4Bit successfully compresses the communication of weights and gradients to nearly 4 bits, with a negligible impact on final loss. Notably, compared to non-quantized baseline, it achieves 4.08$\times$ speedup for a GPT-18B model trained on 128 H800 GPUs.
\end{itemize}

\section{Preliminaries}

\subsection{Sharded Data Parallelism}
\label{sec:prelim_shardeddp}
Sharded Data Parallelism (ShardedDP) modifies traditional Data Parallelism (DP) to reduce the memory footprint per GPU. Unlike traditional DP, which duplicates high-precision optimizer states~(typically including model weights and momentum variables in Float32) on each GPU, ShardedDP partitions them across all GPUs. Each GPU manages \( \frac{1}{P} \) of the optimizer states, hence reducing the corresponding memory footprint by a factor of $\frac{1}{P}$, where \( P \) represents the number of GPUs involved.

With high-precision model weights sharded across GPUs (referred to as \textbf{``main weights''}), an all-gather operation is required to collect the weights for the forward-backward steps (typically in relatively low precision, such as Float16, referred to as \textbf{``model weights''}). For gradient synchronization, a reduce-scatter operation is performed before the optimization steps to ensure that each GPU has the corresponding shard of averaged gradients. In summary, each iteration necessitates an all-gather for weights and a reduce-scatter for gradients.

Driven by the need to train larger models within the constraints of GPU memory, ShardedDP is incorporated into several popular training frameworks, including Megatron-LM (Distributed Optimizer), DeepSpeed (ZeRO), and PyTorch (FSDP), each with slightly different implementation strategies. Notably, ZeRO-3 and FSDP release the collected weights after each computation to enhance memory efficiency, necessitating additional weight collective communication during the backward pass. Conversely, ZeRO-2 and Megatron-LM retain the collected weights throughout, thus eliminating the need for weight collection during the backward pass. Our weight reduction strategy is particularly well-suited for Megatron-LM, as it maintains a full model’s weights at all times (see Algorithm~\ref{alg:megatron_weigh_comm}). Additionally, Megatron-LM provides flexible parallelism support, such as tensor parallelism, which partitions models vertically to alleviate memory limitations. This approach enables the training of larger models compared to DeepSpeed.

\definecolor{qsdp}{rgb}{0.9,0.712,0.712}
\definecolor{zero}{rgb}{0.712,0.712,0.9}
\begin{figure}[t]
\begin{minipage}[t]{0.53\textwidth}
\setlength\fboxsep{0.1pt}
\begin{algorithm}[H]

\caption{\textcolor{qsdp}{QSDP} / \textcolor{zero}{ZeRO++}}
\label{alg:zero_weigh_comm}
\begin{algorithmic}[1]
\small{
\REQUIRE worker: $p$, weight in shard $p$: $w[p]$, local gradient on worker $p$: $g^p_{model}$, global gradient in shard $p$: $g_{main}[p]$
\STATE \textbf{function CompressedForwardPass}
\STATE \quad $\tilde{w}_{main}[p] \leftarrow \text{QuantizeWeights}(w_{main}[p])$
\STATE \quad $w_{model} \leftarrow \text{AllGather}(\tilde{w}_{main}[p])$
\STATE \quad $output^p \leftarrow \text{ForwardPass}(w_{model}, input^p)$
\STATE \quad $\text{free}(w_{model})$
\STATE \textbf{function CompressedBackwardPass}
\STATE \quad $\tilde{w}_{main}[p] \leftarrow \text{QuantizeWeights}(w_{main}[p])$
\STATE \quad $w_{model} \leftarrow \text{AllGather}(\tilde{w}_{main}[p])$
\STATE \quad $g^p_{model} \leftarrow \text{Gradient}(w_{model}, output^p)$
\STATE \quad $\text{free}(w_{model})$
\STATE \quad $\tilde{g}^p_{model} \leftarrow \text{QuantizeGradients}(g^p_{model})$
\STATE \quad $g_{main}[p] \leftarrow \text{\textcolor{qsdp}{ReduceScatter}/\textcolor{zero}{TwoAlltoAll}}(\tilde{g}^p_{model})$
\STATE \quad $w_{main}[p] \leftarrow \text{Optimizer}(g_{main}[p], w_{main}[p])$
}
\end{algorithmic}
\end{algorithm}

\end{minipage}
\hfill
\begin{minipage}[t]{0.5\textwidth}
\setlength\fboxsep{0.1pt}

\begin{algorithm}[H]
\caption{Megatron-LM with \textcolor{red}{SDP4Bit}}
\label{alg:megatron_weigh_comm}
\begin{algorithmic}[1]
\small{
\REQUIRE worker: $p$, weight in shard $p$: $w[p]$, local gradient on worker $p$: $g^p_{model}$, global gradient in shard $p$: $g_{main}[p]$, weight difference: $d$
\STATE \textbf{function CompressedForwardPass}
\STATE \quad \textcolor{red}{$d[p] = w_{main}[p] - w_{model}[p]$}
\STATE \quad \textcolor{red}{$\tilde{d}[p] \leftarrow \text{QuantizeWeightsDiff}(d[p])$}
\STATE \quad $d \leftarrow \text{AllGather}(\tilde{d}[p])$
\STATE \quad \textcolor{red}{$w_{model} \leftarrow w_{model} + d$}
\STATE \quad $output^p \leftarrow \text{ForwardPass}(w_{model}, input^p)$
\STATE \textbf{function CompressedBackwardPass}
\STATE \quad $g^p_{model} \leftarrow \text{Gradient}(w_{model}, output^p)$
\STATE \quad $g_{main}[p] \leftarrow \text{\hcancel{ReduceScatter} \textcolor{red}{TLq-HS}}(g^p_{model})$
\STATE \quad $w_{main}[p] \leftarrow \text{Optimizer}(g_{main}[p], w_{main}[p])$
}
\end{algorithmic}
\end{algorithm}

\end{minipage}
\vspace{-6mm}
\end{figure}

\subsection{Quantization}
Quantization is a commonly used strategy in data compression. In this paper, we explore symmetric linear (integer) quantization due to its low overhead and latency. It is defined as follows:
\begin{small}
\begin{align*}
x_{\text{int}} = \text{round}\left(\frac{x}{s} \cdot (2^{k-1} - 1)\right), \quad s = \max(x),
\end{align*}
\end{small}
where $k$ represents the bit-width of the quantized values, and $s$ is referred to as "scales".

Additionally, group-wise quantization~\citep{shen2020q} is employed to minimize quantization error by dividing the data into multiple groups and quantizing each group individually. This approach results in a lower compression ratio due to the need to store additional scales.

\subsection{Collective Reduction Communication with Quantization}
\label{sec:prelim_all2all}
State-of-the-art (SOTA) collective communication libraries (e.g., NCCL, Gloo) employ a ring-based algorithm for its optimal bandwidth~\cite{patarasuk2009bandwidth}. This algorithm executes reduce-scatter operations across \( P-1 \) rounds, during which each GPU sends local data and aggregates the received data. When quantization is applied, this necessitates \( P-1 \) rounds of quantization and dequantization, potentially leading to error propagation and increased latency~\cite{huang2024gzccl}. Some strategies replace reduce-scatter with all-to-all communication, but this increases inter-node communication, typically with lower bandwidth.

ZeRO++~\cite{wang2023zero} modifies this approach by substituting the conventional reduce-scatter (used by QSDP) with two all-to-all operations (shown in Algorithm~\ref{alg:zero_weigh_comm}, with different colors to distinguish ZeRO++ from QSDP). The first operation is confined within each node, and post-reduction, the data size is diminished to $\frac{1}{N}$, where \( N \) is the number of GPUs per node. The subsequent all-to-all operation occurs between GPUs across different nodes that share the same local rank. In ZeRO++, each all-to-all operation follows a 4-bit quantization step to minimize communication data size.

As shown in Figure~\ref{fig:int4vsTLG}, while this method efficiently integrates quantization into reduce-scatter without augmenting inter-node communication, \textit{the repeated 4-bit quantization steps can accumulate quantization errors, potentially leading to suboptimal training outcomes}.

\begin{figure}[h]
    \centering
    \begin{minipage}{0.63\textwidth}
        \centering
        \includegraphics[width=\textwidth]{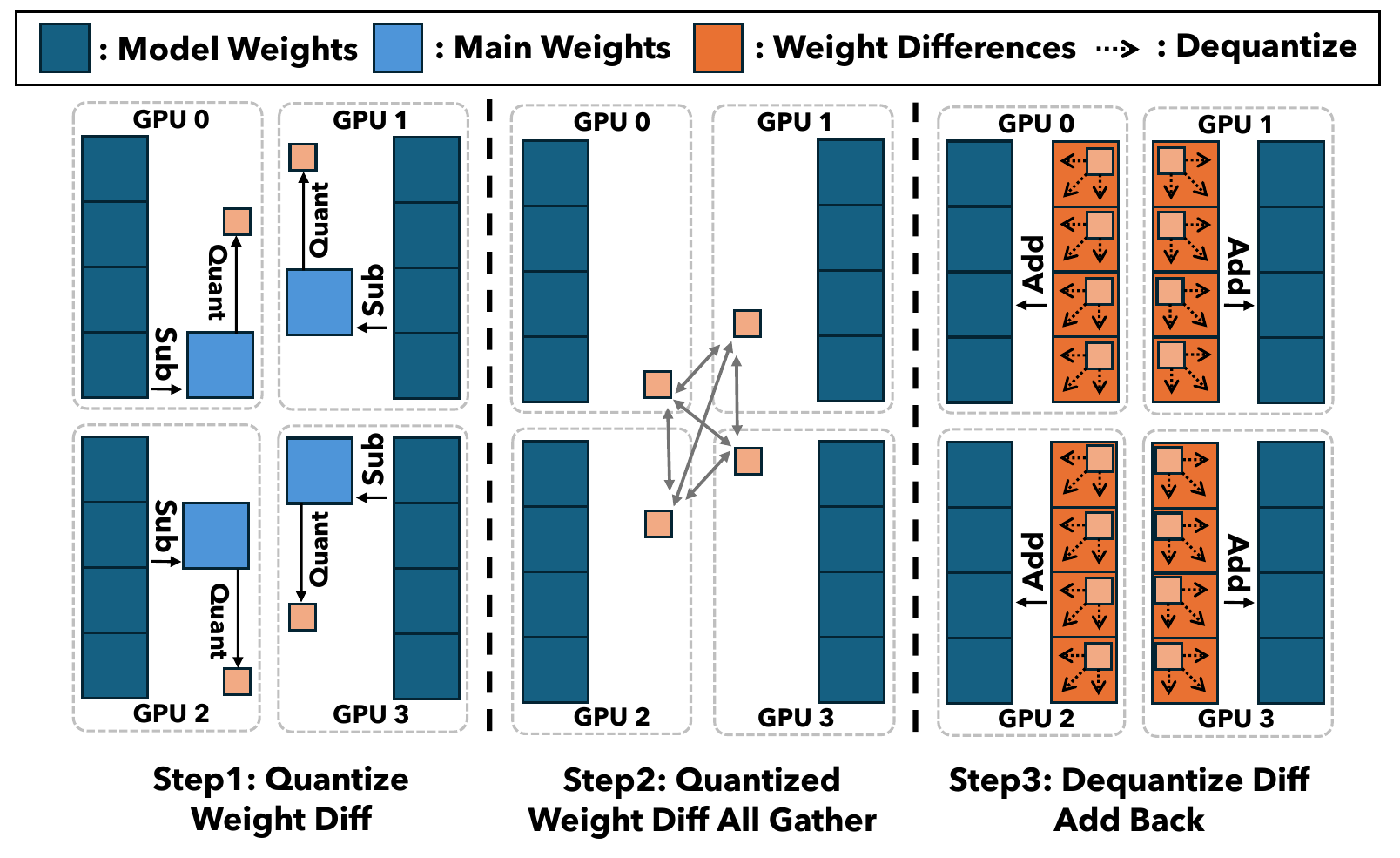}
        \caption{Communication of quantized weight differences.}
        \label{fig:gather_weighdiff}
    \end{minipage}\hspace{0.3cm}
    \begin{minipage}{0.34\textwidth}
        \centering
        \includegraphics[width=\textwidth]{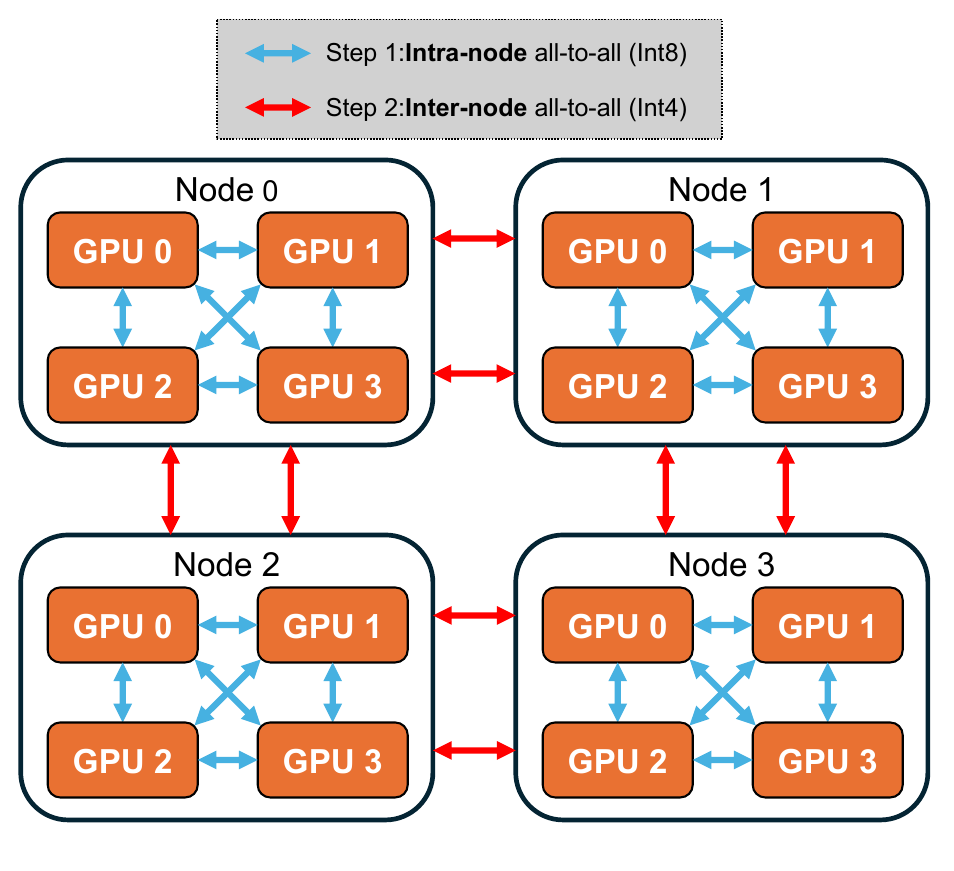}
        \caption{Two-level gradient quantization:  8-bit intra-node and 4-bit inter-node quantization.}
        \label{fig:mix_alltoall}
    \end{minipage}
\end{figure}

\section{Methodology}

\begin{wrapfigure}{R}{0.55\textwidth}
    \includegraphics[width=0.55\textwidth]{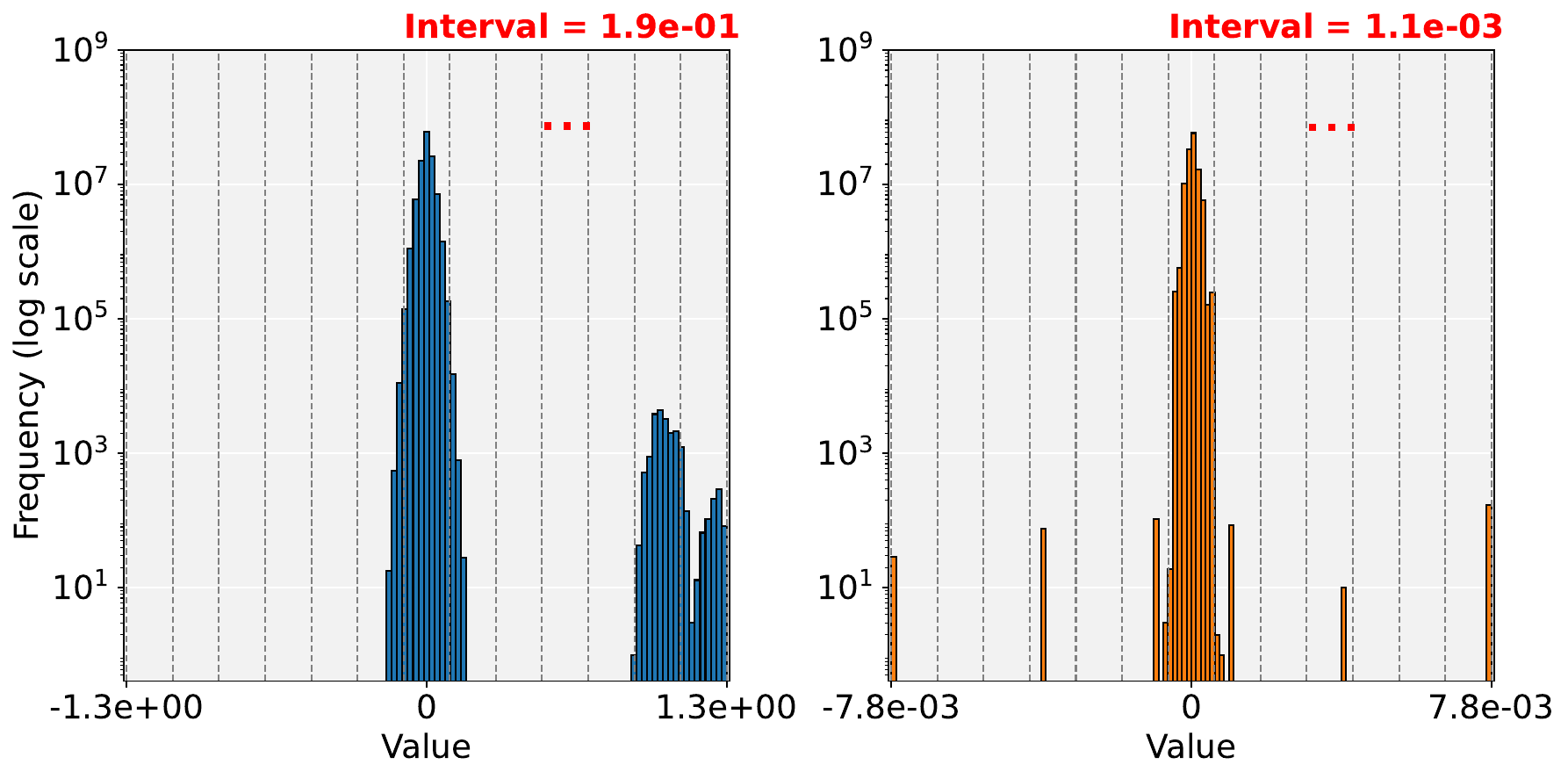}
    \vspace*{-6mm}
    \caption{Histogram of (a) weights and (b) weight differences. Each vertical dashed line represents a quantization level corresponding to a 4-bit quantization lattice.}
    \label{fig:weight_hist}
\end{wrapfigure}

\subsection{Quantization on Weight Differences  \textit{(qWD)}}
As discussed in Section~\ref{sec:prelim_shardeddp}, ShardedDP requires each GPU to send/receive updated weights (main weights) to/from other GPUs in each iteration. \textit{However, weights generally exhibit a wide range and directly applying 4-bit quantization leads to significant quantization errors}. Even with group-wise quantization, a gap in E2E training loss compared to full precision remains despite using small group sizes.

\textbf{qWD:} To address this issue, we quantize weight differences instead of the original weights during communication. As illustrated in Figure~\ref{fig:gather_weighdiff} and Algorithm~\ref{alg:megatron_weigh_comm}, after the optimizer step, each GPU calculates the differences between the main weights and the model weights. These differences are then quantized and all-gathered across all GPUs. After all-gathering, each GPU dequantizes the received data to obtain the weight differences. These differences are then added to the model weights to obtain the updated weights.

There are two main benefits to applying quantization to weight differences.

\textbf{1)} In practice, weight differences are generally easier to quantize. As shown in Figure~\ref{fig:weight_hist}, weight differences are more uniformly distributed in a smaller range compared to the weights themselves, resulting in smaller errors for INT4 quantization. 
Furthermore, since intuitively the magnitudes of weight differences are smaller than those of weights themselves (informally supposed that $\|\delta w_t\| = \|w_t - w_{t-1}\| < \|w_{t}\|$) and the relative quantization errors are similar between weights and weight differences  (informally supposed that $\frac{\|q(\delta w_t) - \delta w_t\|}{\|\delta w_{t}\|} \approx \frac{\|q(w_t) - w_t\|}{\|w_{t}\|}$), the weight differences compression potentially has a smaller error relative to the weights themselves: $\frac{\|q(\delta w_t) - \delta w_t\|}{\|w_{t}\|} \lessapprox \frac{\|q(w_t) - w_t\|}{\|w_{t}\|}$, where $q(\cdot)$ is the quantization function.

\textbf{2)} In theory, weight differences compression improves convergence compared to naive weight compression. When extended to biased compressors, we present theoretical guarantees for convergence at the same rate as ordinary SGD, as detailed in Section~\ref{sec:convergence}. In contrast, we demonstrate that using biased compressors directly on weights can lead to convergence failure, as illustrated in an example in Section~\ref{thm:counter_example}. This proves that biased compressors are not compatible with QSDP or ZeRO++.

\subsection{Two-Level Gradient Smooth Quantization}
\subsubsection{Two-level Gradient Quantization \textit{(TLq)}}
As discussed in Section~\ref{sec:prelim_all2all}, the 2-step all-to-all communication strategy benefits gradient communication when quantization is applied. However, it also introduces error accumulation due to consecutive 4-bit quantization steps, necessitating additional rounds of training and communication that diminish the per-iteration communication savings. We observe that \textit{applying ULq, which quantizes gradeints to extremely low precision, such as 4-bit, leads to noticeable deviations in training loss} compared to full-precision training, as illustrated in Figure~\ref{fig:int4vsTLG}.

\textbf{\textit{TLq:}} Instead of employing a global 4-bit quantization strategy for both inter-node and intra-node all-to-all communications, we propose a two-level precision quantization strategy. This approach balances performance and accuracy by enhancing accuracy without introducing additional overhead. For intra-node all-to-all communication, gradients are quantized to INT8 before sending. After receiving the data, each GPU dequantizes the received data back to the full precision (i.e., FP32) for local reduction. This reduced data is then quantized to INT4 to minimize inter-node communication overhead. The detailed methodology is depicted in Figure~\ref{fig:mix_alltoall}. Since the two all-to-all operations utilize different network bandwidths, their communications can be effectively overlapped (see Table~\ref{tab:grad_ablation}). %

\subsubsection{\textbf{\textit{TLq} with Hadamard Smoother \textit{(TLq-HS)}}}
While \textit{TLq} brings the training loss closer to the baseline, it does not achieve perfect alignment. In gradient quantization, \textit{outliers can significantly amplify quantization errors}. Although group-wise quantization isolates outliers to minimize their impact on the precision of values in other groups, the values within the same group remain affected.

\textbf{\textit{TLq-HS:}} To mitigate the outlier issue, we apply Hadamard transform~\citep{hedayat1978hadamard} to the gradients before quantization.
The Hadamard transform, a specific type of generalized Fourier transform, exhibits properties such as \( H = H^T \) and $ H \cdot H^T = I$, distributing outlier information across nearby elements and effectively smoothing them out. For a detailed description of the methodology, see Algorithm \ref{alg:grad_comm}.

\subsection{Performance Optimizations in Implementation}
\label{sec:performance}

\begin{wrapfigure}{hR}{0.55\textwidth}
\vspace*{-8mm}
\begin{minipage}[t]{0.55\textwidth}
\setlength\fboxsep{0.1pt}
\begin{algorithm}[H]
\caption{TLq with Hadamard Smoother}\label{alg:grad_comm}
\begin{algorithmic}[1]
\REQUIRE gradient $grad$
\STATE \textbf{function TLq-HS}
\STATE \quad \text{$\hat{g} \leftarrow \text{Hadamard}(grad)$}
\STATE \quad {$\hat{qg}_{8bit} \leftarrow \text{Quantize8Bit}(\hat{g})$}
\STATE \quad {$list(\hat{qg}_{8bit\_intra}) \leftarrow \textbf{IntraAlltoAll}(\hat{qg}_{8bit})$} 
\STATE \quad {$list(\hat{g}_{intra}) \leftarrow \text{Dequantize}(list(\hat{qg}_{8bit\_intra}))$}
\STATE \quad \hcancel{\text{$list(\hat{g}_{intra}) \leftarrow \text{Hadamard}(list(\hat{g}_{intra}))$}}
\STATE \quad {$\hat{g}_{reduced} \leftarrow \text{Reduction}(list(\hat{g}_{intra}))$}
\STATE \quad \hcancel{\text{$\hat{g}_{reduced} \leftarrow \text{Hadamard}(\hat{g}_{reduced})$}}
\STATE \quad {$\hat{qg}_{4bit} \leftarrow \text{Quantize4Bit}(\hat{g}_{reduced})$} 
\STATE \quad {$list(\hat{qg}_{4bit\_inter}) \leftarrow \textbf{InterAlltoAll}(\hat{qg}_{4bit})$} 
\STATE \quad {$list(\hat{g}_{inter}) \leftarrow \text{Dequantize}(list(\hat{qg}_{4bit\_inter}))$} 
\STATE \quad {$\hat{g}_{final\_reduced} \leftarrow \text{Reduction}(list(\hat{g}_{inter}))$} 
\STATE \quad \text{$g_{final\_reduced} \leftarrow \text{Hadamard}(\hat{g}_{final\_reduced})$}
\end{algorithmic}
\end{algorithm}
\end{minipage}
\vspace*{-2mm}
\end{wrapfigure}

\textbf{Optimizing Memory Efficiency for Weight Differences Computation:}
After the all-gather communication, each GPU receives weight differences from the others, which are then added to the model weights to update them. As discussed in Section~\ref{sec:prelim_shardeddp}, with ShardedDP enabled in Megatron-LM, each GPU maintains a complete copy of the model weights for forward and backward computation. This contrasts with ZeRO, where model weights are released after computation. By reusing these locally stored model weights in Megatron-LM, our implementation eliminates the need for additional buffers to retain model weights for calculating the differences, thus enhancing memory efficiency.

\textbf{Simplifying Hadamard Transforms:}
In a naive implementation, the Hadamard transform would be applied at each step before quantization and after dequantization. However, by leveraging the orthogonality of the Hadamard transform, i.e. $ H \cdot H = I$, we omit the transform after the intra-node all-to-all dequantization (Algorithm~\ref{alg:grad_comm}, Line 6) and before the inter-node quantization (Algorithm~\ref{alg:grad_comm}, Line 8). Furthermore, by utilizing the distributive property, i.e., $\sum_i H g_i = H \sum_i g_i$, we move the second Hadamard transform from after the inter-node dequantization (Algorithm~\ref{alg:grad_comm}, Line 11) to after the final reduction (Algorithm~\ref{alg:grad_comm}, Line 12). These simplifications reduce the unnecessary computational overhead associated with repeated Hadamard transforms.

\textbf{Fusing GPU Kernels with Group Size Alignment:}
To mitigate additional data movement from global memory—which typically exhibits the slowest memory bandwidth—, we fuse the Hadamard transform with the (de)quantization operations into a single CUDA kernel. This fusion allows the operations to run nearly as fast as the quantization operation alone. It is worth noting that, for this fusion to be efficient, there must be an alignment between the two. Specifically, the size of the quantization group must be divisible by the size of the Hadamard matrix, ensuring that memory traffic remains within the kernel block. We choose $H$ to be small (e.g., 32$\times$32) because, at this size, the transform operation on the GPU is typically memory-bound and incurs minimal overhead. While larger $H$ sizes offer better smoothing capabilities, we find that a 32$\times$32 matrix is sufficient to effectively smooth outliers in gradients.

\section{Theoretical Analysis}

\subsection{Counterexample of Biased Weight Compression}

One of the advantages of our proposed weight difference compression is the compatibility to both biased and unbiased compressors.
Note that using biased compressors directly on weight compression incurs issues in convergence under standard assumptions, as avoided in QSDP~\cite{Markov2023QuantizedDT} or ZeRO++~\cite{wang2023zero}. We illustrate such issues in the following toy example.
\begin{counterexample}\label{thm:counter_example}
Consider a least square problem with $w^* = (0, 0)^\top$: $\min_{w \in \R^2} \left[ f(w) = \|w\|^2 \right]$, and stochastic gradient $g(w) = (4 w_1, 0)^\top$ with probability 0.5 and $g(w) = (0, 4 w_2)^\top$ with probability 0.5, thus $\E[g(w)] = \nabla_w f(w)$. We use the initial value $w_{init} = (1, -1)^\top$, the learning rate $\eta < 0.125$, and the following nearest ternary quantizer:
$s = \max(|w|), q(w) = round(w / s) * s$, 
where $|\cdot|$ is element-wise absolute value, and $round(\cdot)$ quantizes each element to the nearest value in $\{-1, 0, 1\}$. It is easy to check that for SGD under such settings, the weights before quantization will be either $(1 - 4\eta, -1)^\top$ or $(1, -1 + 4\eta)^\top$, resulting in $q(w) = (1, -1)^\top$, which means that SGD with ternary weight quantization gets stuck at the initial value in this case, while SGD without weight quantization and SGD with weight difference quantization both converge to the optimal.
\end{counterexample}

\subsection{Convergence Analysis}
\label{sec:convergence}

To theoretically analyze the convergence of our distributed training algorithm with communication compression, we focus on the following SGD variant with gradient compression and weight difference compression. We use SGD to solve the following optimization problem:
$f^* = \min_w f(w)$,
where $f(w)$ is the objective function, $w \in \R^d$ is the model parameter.

\begin{algorithm}[hbt!]
\caption{SGD with SDP4Bit}
\begin{algorithmic}[1]
\STATE Initialize main parameter weights $w_0$
\STATE Initialize compressed parameter weights $\tilde{w}_0 \leftarrow w_0$
\FORALL{iteration $t \in [T]$}
    \STATE Compute gradient: $g_{t-1} = \nabla f(\tilde{w}_{t-1}; \zeta_{t-1})$
    \STATE Compress gradient: $\tilde{g}_{t-1} = \UM_g(g_{t-1})$
    \STATE Update main weights: $w_t \leftarrow w_{t-1} - \eta \tilde{g}_{t-1}$
    \STATE Compress weight difference: $\tilde{\Delta}_t = \CM_w(w_t - \tilde{w}_{t-1})$
    \STATE Update compressed weights: $\tilde{w}_t \leftarrow \tilde{w}_{t-1} + \tilde{\Delta}_t$
\ENDFOR
\end{algorithmic}
\label{alg:sgd_compress}
\end{algorithm}

Note that we use unbiased compressors for gradient reduction, and arbitrary (potentially biased) compressors for weight collection. We formally define these two classes of compressors as follows.

\begin{definition}[Unbiased $\kappa$-approximate compressor~\citep{Alistarh2016QSGDCS}]
An operator $\UM: \R^d \rightarrow \R^d$ is a $\kappa$-approximate compressor for $\kappa \geq 0$ if $\E[ \UM(v) ] = v$ and $\E\|\UM(v) - v\|^2 \leq \kappa \|v\|^2, \forall v \in \R^d$.
\end{definition}

\begin{definition}[$\delta$-approximate compressor~\citep{Karimireddy2019ErrorFF}]
An operator $\CM: \R^d \rightarrow \R^d$ is a $\delta$-approximate compressor for $\delta \in [0, 1]$ if $\E\|\CM(v) - v\|^2 \leq (1-\delta) \|v\|^2, \forall v \in \R^d$.
\end{definition}

\begin{remark}
Note that, in a certain sense, the class of $\delta$-approximate compressors contains the class of unbiased compressors. It is easy to check that any $\kappa$-approximate unbiased compressor $\UM$ can be converted to a $\frac{1}{1 + \kappa}$-approximate biased compressor $\CM(v) = \frac{1}{1 + \kappa} \UM(v)$. Furthermore, the class of $\delta$-approximate compressors typically provides more options such as top-$k$ sparsifiers, and top-$k$ low-rank compressors. Thus, we consider arbitrary (biased or unbiased) $\delta$-approximate compressors for weight compression in our theoretical analysis.
\end{remark}

\begin{remark}
For distributed training with $P$ workers, we define the compressed gradient as $\tilde{g}_t = \UM_g(g_t) = \frac{1}{P} \sum_{i \in [P]} \UM'_g(g_{t, i})$, where $g_t = \frac{1}{P} \sum_{i \in [P]} g_{t, i}$, and $g_{t, i}$ is the stochastic gradient from the $i$th worker in $t$ iteration. We assume that $\UM_g$ is an unbiased $\kappa$-approximate compressor of the average gradient $g_t$.
\end{remark}

\begin{assumption} (Smoothness)
We assume that $f(x)$ is $L$-smooth:
$
\| \nabla f(x) - \nabla f(y) \| \leq L \|x - y\|, \forall x, y \in \R^d
$
, which implies
$
f(y) - f(x) 
\leq \ip{\nabla f(x)}{y-x} + \frac{L}{2} \|y-x\|^2
$.
\label{asm:smoothness}
\end{assumption}

\begin{assumption} 
For any stochastic gradient $\nabla f(w; \zeta)$, where $\zeta$ is an independent random sample, we assume unbiasedness $\E[\nabla f(w; \zeta) | w] = \nabla f(w)$, and bounded variance
$
    \E [ \nabla f(w; \zeta) - \nabla f(w)\|^2 | w ] \leq \rho \| \nabla f(w) \|^2 + \sigma^2
$~(\cite{Stich2019TheEF}, Assumption 3).
\label{asm:gradient}
\end{assumption}
We derive the following error bounds on the convergence of SDP4Bit under the above assumptions. All proofs can be found in Appendix~\ref{sec:proofs}.
\begin{restatable}[Convergence error bound]{theorem}{errboundthm}
For arbitrary non-convex function under Assumption~\ref{asm:smoothness} and Assumption~\ref{asm:gradient}, taking learning rate $\eta \leq \frac{1}{10L \left(\frac{2}{\delta} + \rho \kappa + \rho + \kappa \right)}$, Algorithm~\ref{alg:sgd_compress} converges to a critical point with the following error bound:
\begin{small}
\begin{align*}
\frac{ \sum_{t=0}^T \E[ \| \nabla f(\tilde{w}_t) \|^2 ]}{T+1} \leq \frac{80L \left(\frac{2}{\delta} + \rho \kappa + \rho + \kappa \right) (f(w_0) - f^*)}{T+1} 
+ 4\sigma \sqrt{ \frac{ (11-\delta) (\kappa + 1) L (f(w_0) - f^*)}{T + 1} }.
\end{align*}
\end{small}
\label{thm:convergence}
\end{restatable}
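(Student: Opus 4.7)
The plan is to couple the virtual sequence $\{w_t\}$ generated by the pure SGD updates with the actually-used iterate $\{\tilde{w}_t\}$ through the discrepancy $e_t := w_t - \tilde{w}_t$. First I would observe $w_t - \tilde{w}_{t-1} = e_{t-1} - \eta \tilde{g}_{t-1}$, so by the definition of the weight-difference update,
\begin{align*}
e_t \;=\; (w_t - \tilde{w}_{t-1}) - \CM_w(w_t - \tilde{w}_{t-1}).
\end{align*}
The $\delta$-approximate property of $\CM_w$ combined with Young's inequality with parameter $\delta/(2(1-\delta))$ then yields a contractive recursion
\begin{align*}
\E\|e_t\|^2 \;\leq\; (1 - \tfrac{\delta}{2})\, \E\|e_{t-1}\|^2 + \tfrac{2}{\delta}\,\eta^2\, \E\|\tilde{g}_{t-1}\|^2,
\end{align*}
which I would unroll and sum to obtain $\sum_{t=0}^{T} \E\|e_t\|^2 \leq \tfrac{4}{\delta^2}\eta^2 \sum_{t=0}^{T-1} \E\|\tilde{g}_t\|^2$.

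Next I would carry out the standard smoothness-based descent analysis on the virtual sequence $w_{t+1} = w_t - \eta \tilde{g}_t$. Since $\UM_g$ is unbiased and the stochastic gradient is unbiased, $\E[\tilde{g}_t \mid \tilde{w}_t] = \nabla f(\tilde{w}_t)$, so Assumption~\ref{asm:smoothness} together with the polarization identity $\ip{a}{b} = \tfrac{1}{2}(\|a\|^2 + \|b\|^2 - \|a-b\|^2)$ gives
\begin{align*}
\E[f(w_{t+1})] \;\leq\; \E[f(w_t)] - \tfrac{\eta}{2} \E\|\nabla f(\tilde{w}_t)\|^2 + \tfrac{\eta L^2}{2} \E\|e_t\|^2 + \tfrac{L\eta^2}{2} \E\|\tilde{g}_t\|^2,
\end{align*}
where the $L^2\|e_t\|^2$ factor comes from bounding $\|\nabla f(w_t) - \nabla f(\tilde{w}_t)\|^2$ via smoothness. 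The second moment of the compressed gradient is controlled by the $\kappa$-approximate unbiasedness of $\UM_g$ followed by Assumption~\ref{asm:gradient}:
\begin{align*}
\E\|\tilde{g}_t\|^2 \;\leq\; (1+\kappa) \E\|\nabla f(\tilde{w}_t;\zeta_t)\|^2 \;\leq\; (1+\kappa)(1+\rho)\, \E\|\nabla f(\tilde{w}_t)\|^2 + (1+\kappa)\sigma^2.
\end{align*}

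After telescoping the descent inequality over $t = 0, \dots, T$, substituting the discrepancy bound and the second-moment bound, the coefficient of $\sum_t \E\|\nabla f(\tilde{w}_t)\|^2$ on the right-hand side takes the form $\bigl(\tfrac{2\eta^3 L^2}{\delta^2} + \tfrac{L\eta^2}{2}\bigr)(1+\kappa)(1+\rho)$, and the choice $\eta \leq \tfrac{1}{10 L (2/\delta + \rho\kappa + \rho + \kappa)}$ is precisely what makes this coefficient at most $\eta/4$, so it can be absorbed into the LHS. This yields
\begin{align*}
\tfrac{\eta}{4} \sum_{t=0}^{T} \E\|\nabla f(\tilde{w}_t)\|^2 \;\leq\; f(w_0) - f^* + C\, \eta^2 L (1+\kappa)\sigma^2 (T+1),
\end{align*}
for an absolute constant $C$. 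Dividing by $\eta(T+1)/4$ and then optimizing $\eta$ within the admissible range as the minimum of the curvature-limited bound and the standard $1/\sqrt{T}$-style stepsize $\sqrt{(f(w_0)-f^*)/[L(\kappa+1)\sigma^2 (T+1)]}$ produces the deterministic $1/T$ term and the stochastic $1/\sqrt{T}$ term of the stated bound, with constants $80$ and $4$ emerging from the explicit numerical tuning.

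The main obstacle is the circular dependence between $\E\|e_t\|^2$, $\E\|\tilde{g}_t\|^2$, and $\E\|\nabla f(\tilde{w}_t)\|^2$: the discrepancy is bounded by accumulated compressed-gradient norms, which in turn are bounded by true gradient norms plus noise. The step-size condition must therefore simultaneously absorb the contraction factor $2/\delta$ from weight-difference compression, the variance factor $\rho$, and the gradient-compression factor $\kappa$, which is exactly why the admissible range for $\eta$ involves the combined quantity $2/\delta + \rho\kappa + \rho + \kappa$. A secondary care point will be handling the biased compressor $\CM_w$ without error-feedback machinery, which is possible here only because $\CM_w$ is applied to the increment $w_t - \tilde{w}_{t-1}$ rather than to the weights themselves (in contrast with the counterexample of Section~\ref{thm:counter_example}).
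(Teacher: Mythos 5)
Your high-level plan is the same as the paper's: couple the true SGD iterates $w_t$ to the quantized $\tilde w_t$ through $e_t=w_t-\tilde w_t$, run a smoothness descent on $w_t$ that pays $\tfrac{\eta L^2}{2}\|e_t\|^2$ for evaluating the gradient at $\tilde w_t$, control the second moment of the compressed gradient by $(1+\kappa)(1+\rho)\|\nabla f(\tilde w_t)\|^2+(1+\kappa)\sigma^2$, set up a contractive recursion for $\E\|e_t\|^2$, unroll it, and finally tune $\eta$ to get the $1/T + 1/\sqrt T$ form (the paper does the last step via the Stich-type Lemma~\ref{lem:seq_converge}). The descent lemma and second-moment bound in your sketch are correct.

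The gap is in the discrepancy recursion, and it is not a constant-factor slackness but a structural one. You apply Young's inequality (with parameter $b=\delta/(2(1-\delta))$) to the entire vector $e_{t-1}-\eta\tilde g_{t-1}$, which yields a $\tfrac{2}{\delta}\eta^2\E\|\tilde g_{t-1}\|^2$ term and hence, after inserting the second-moment bound, a coefficient $\tfrac{2}{\delta}\eta^2(1+\kappa)(1+\rho)$ on $\|\nabla f(\tilde w_t)\|^2$. After summing the geometric series you pick up another $\tfrac{2}{\delta}$, so the contribution to the telescoped inequality is $\tfrac{2\eta^3 L^2}{\delta^2}(1+\kappa)(1+\rho)\sum_t\E\|\nabla f(\tilde w_t)\|^2$. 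Under the stated stepsize $\eta L\le\tfrac{1}{10(2/\delta+A)}$ with $A=\rho\kappa+\rho+\kappa$, this coefficient divided by $\eta$ is $\tfrac{1+A}{50(2+\delta A)^2}$, which is \emph{not} uniformly $\le 1/4$: it is maximized at $A=2(1-\delta)/\delta$ where it equals $\tfrac{1}{200\delta(2-\delta)}\to\infty$ as $\delta\to0$ (concretely, at $\delta=10^{-3}$, $A\approx2000$ it is about $2.5$). So the absorption into $-\eta/4\|\nabla f(\tilde w_t)\|^2$ that you assert ``the choice of $\eta$ makes precise'' simply fails, and with this recursion you would need an additional $1/\delta$-type factor in the stepsize restriction, yielding a weaker theorem than the one stated.

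The paper avoids this by not Young-ing against the full $\eta\tilde g_t$. It first decomposes $e_t-\eta\UM_g(g_t)$ into $e_t-\eta g_t$ plus the zero-mean compressor error $\eta(g_t-\UM_g(g_t))$, and then decomposes $e_t-\eta g_t$ into $e_t-\eta\nabla f(\tilde w_t)$ plus the zero-mean stochastic noise $\eta(\nabla f(\tilde w_t)-g_t)$. Because both extra pieces are conditionally mean-zero, the cross terms with $e_t$ vanish in expectation, and the only place Young's inequality is needed is on the deterministic part $\|e_t-\eta\nabla f(\tilde w_t)\|^2$. This gives a coefficient $(1-\delta)\eta^2\bigl(\tfrac{2}{\delta}+\rho\kappa+\rho+\kappa\bigr)$ on $\|\nabla f(\tilde w_t)\|^2$ rather than your $\tfrac{2}{\delta}\eta^2(1+\rho\kappa+\rho+\kappa)$ -- i.e.\ additive $\tfrac{2}{\delta}+A$ instead of multiplicative $\tfrac{2}{\delta}(1+A)$ -- and that additive structure is exactly what the stepsize $\eta\le\tfrac{1}{10L(2/\delta+A)}$ is tuned to absorb (yielding the clean $\tfrac{(1-\delta)\eta}{200}$ after summing the geometric series). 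You need this finer bias--variance split inside the $e_t$ recursion, not just in the descent step, for the stated theorem to go through.
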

\begin{remark}
Note that compared to QSDP~\citep{Markov2023QuantizedDT}, our convergence analysis does not require Polyak-Łojasiewicz condition or the specific choice of weight quantization (random shift). In other words, Theorem~\ref{thm:convergence} shows that our proposed algorithm has the same $\mathcal{O}\left(\frac{1}{\sqrt{T}}\right)$ convergence rate as ordinary SGD for general non-convex functions, but under much weaker assumptions compared to QSDP.
\end{remark}

\section{Evaluation}

\subsection{Experimental Setup}
\textbf{Hardware:} The experiments are conducted on two different clusters to evaluate SDP4Bit across varying network environments: \textbf{1)} 16 nodes, each node equipped with 4 Nvidia A100-SXM4-40GB GPUs. All nodes are interconnected with a 100 Gbps Slingshot10 network, providing slower inter-node bandwidth. \textbf{2)} 16 nodes, each node equipped with 8 Nvidia H800-SXM5-80GB GPUs. Each node is connected using 8 InfiniBand links, achieving a total bandwidth of 3.2 Tbps, providing higher inter-node bandwidth.

\textbf{Baselines:} We use BFloat16/Float32 (weights/gradients) mixed-precision in Megatron-LM~\citep{shoeybi2020megatronlm} as our basic \textit{Baseline} for both accuracy and E2E throughput analysis. Within each set of experiments, we ensure consistent hyper-parameters to ensure fairness. Detailed parameters are provided in Appendix~\ref{sec:detailed_training_settings}. Additionally, we implement another baseline for comparison in Megatron-LM, using the same quantization strategy in ZeRO++, employing 4-bit quantization for both weights (group-wise weight quantization, refered to as \textit{qW}) and gradients (twice all-to-all with uniform level 4-bit quantization, refer to as \textit{ULq}).

\textbf{Dataset and Models:}
To demonstrate that SDP4Bit does not adversely affect end-to-end training loss, we conduct pre-training on GPT-series~\citep{radford2019language} models ranging from 125M to 6.7B parameters on the Pile dataset~\cite{pile}, using validation loss as the accuracy measure. Each test runs for 80,000 iterations, processing over 40 billion tokens. For throughput evaluation, we select models ranging from 1.3B to 18B parameters, with end-to-end training throughput as the metric. In these tests, the accumulation step is set to 1. Note that model parallel is required for models larger than 6.7B, and different tensor parallel sizes are used on A100 and H800 clusters for models larger than 13B. Please refer to Appendix B Table~\ref{tab:parallel_params} for detailed model parallel configuration.

\subsection{Accuracy Evaluation}
\label{sec:other_quantization_methods}

First, we analyze the impact of SDP4Bit on the accuracy of E2E training. As shown in Table~\ref{tab:e2e_validation_loss}, the training results for three different model sizes indicate that the final loss of SDP4Bit is comparable to the baseline, with a maximum increase of only 0.24\%. Additionally, Figure~\ref{fig:gpt_6_7B} details the training curve of GPT-6.7B, demonstrating that the training curve of SDP4Bit perfectly aligns with the baseline. This indicates that \textit{the impact of SDP4Bit on accuracy is negligible}. In contrast, the 4-bit quantization strategy in ZeRO++ (\textit{which directly applies quantization to weights and uniformly uses 4-bit quantization with all-to-all for gradients}) results in significant accuracy degradation.

Next, we break down and analyze each strategy within SDP4Bit.
\textbf{1) For weight communication reduction,} as shown in Table~\ref{tab:e2e_validation_loss}, directly quantizing weights (\textit{qW}) to 4 bits results in a validation loss that increase of up to 12\% compared to the baseline. In contrast, our weight difference quantization (\textit{qWD}) method achieves a validation loss nearly identical to the baseline. Notably, we use a consistent quantization group size of 2048 for both tests.
\textbf{2) For gradient communication reduction,} as shown in Figure~\ref{fig:int4vsTLG}, applying \textbf{U}niform \textbf{L}evel \textbf{q}uantization \textit{(ULq)}, similar to the method used in ZeRO++, results in a significant gap in the loss compared to the baseline. 
In comparison, our \textbf{T}wo \textbf{L}evel \textbf{q}uantization \textit{(TLq)} siginificantly mitigate the loss gap between with baseline.
Additionally, Figure~\ref{fig:hist_grad_hadamard} illustrates the effectiveness of the Hadamard transformation in smoothing outliers. Table~\ref{tab:e2e_validation_loss} and Figure~\ref{fig:int4vsTLG} further demonstrate the contribution of the Hadamard smoother to accuracy. Notably, compared to \textit{TLq}, \textit{TLq-HS} further narrows the validation loss gap, making it almost identical to the baseline.
\begin{figure}[hb]
    \centering
    \begin{minipage}{0.60\textwidth}
        \centering
        \includegraphics[width=\textwidth]{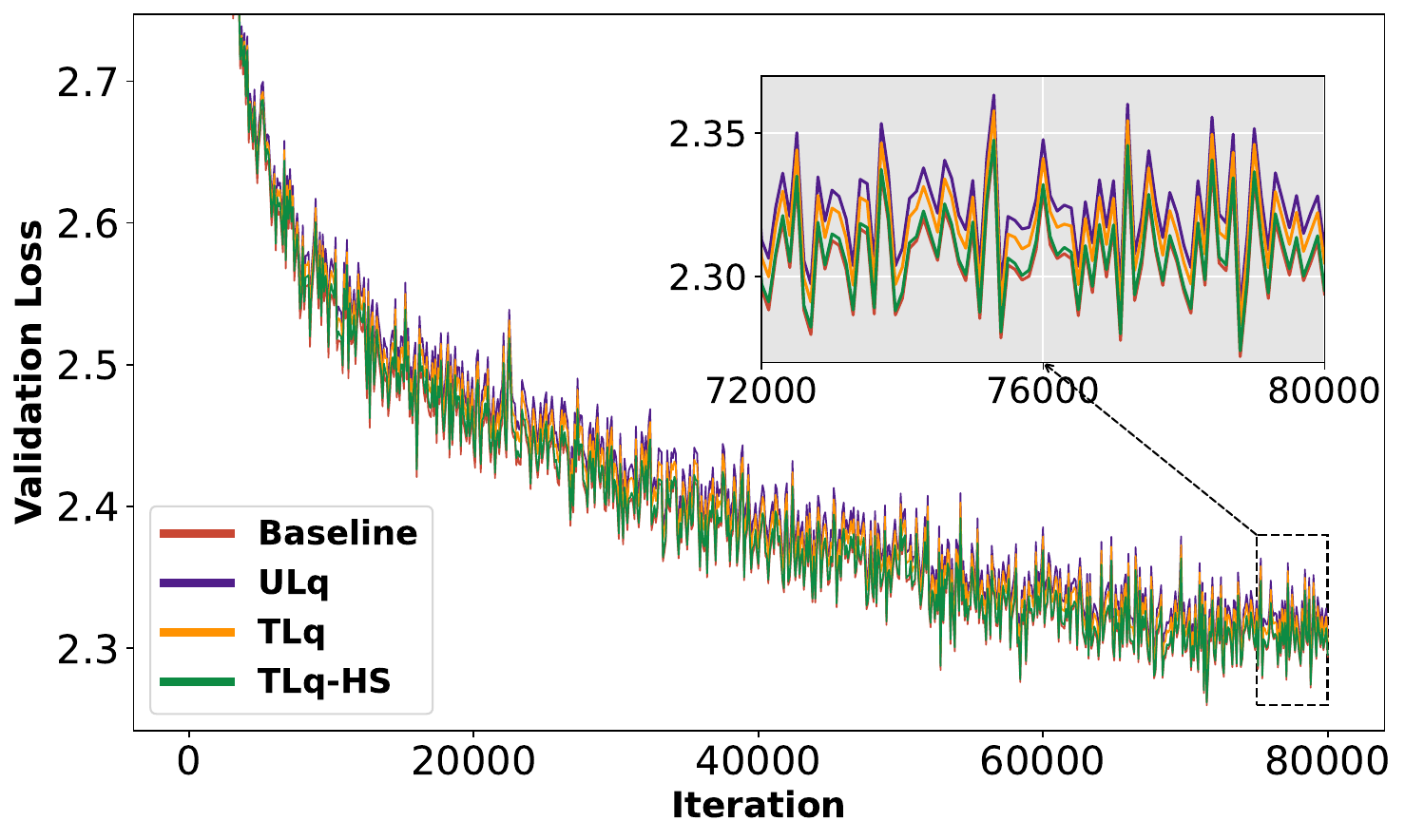}
        \caption{Validation loss comparison for the Baseline, ULq, TLq, and TLq-HS on the GPT-125M model. Uniformly applying 4-bit gradient quantization twice results in a noticeable gap compared to the baseline. In contrast, two-level quantization (8-bit for intra-node and 4-bit for inter-node) mitigates this gap. The Hadamard smoother further reduces the gap, making the loss nearly identical to the baseline.}
        \label{fig:int4vsTLG}
    \end{minipage}
    \hspace{7mm}
    \begin{minipage}{0.33\textwidth}
        \centering
        \includegraphics[width=\textwidth]{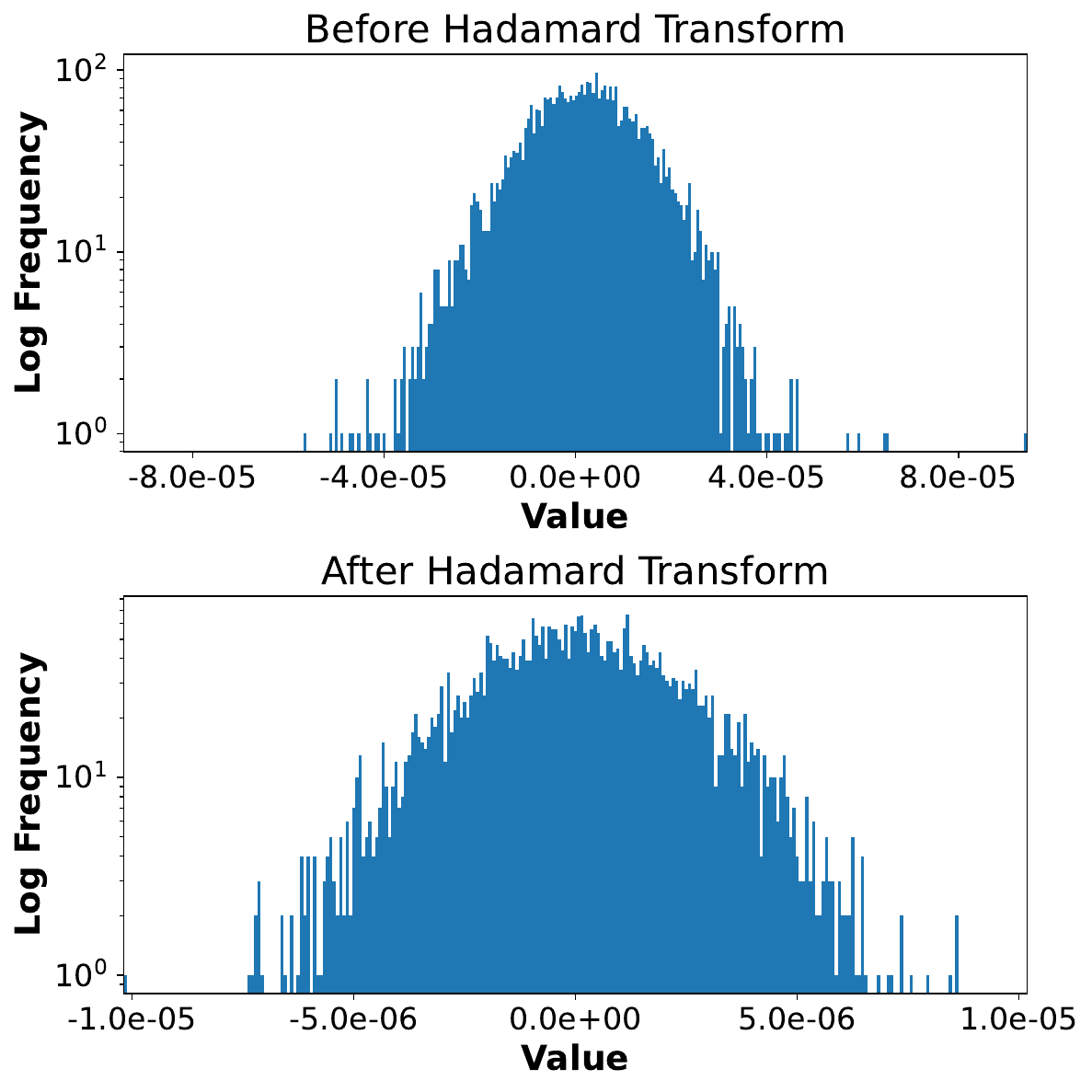}
        \caption{Comparison of gradient histograms before and after the Hadamard transformation. The transformation reduces the impact of outliers, resulting in a smoother gradient distribution.}
        \label{fig:hist_grad_hadamard}
    \end{minipage}
\end{figure}

\begin{table}[t]
\centering
\vspace*{-1.1cm}
\begin{minipage}{0.95\textwidth}
\caption{Final validation loss$\downarrow$ of pre-training with different quantization strategies.}
\vspace*{0.1cm}
\centering
\label{tab:e2e_validation_loss}
\begin{adjustbox}{width=0.8\linewidth}
\begin{tabular}{c|c|cc|cc|c}
\toprule
 \multirow{2}{*}{GPT Model} & \multirow{2}{*}{Baseline} & \multicolumn{2}{c|}{Weight } & \multicolumn{2}{c|}{Gradient} & \multirow{2}{*}{\textbf{SDP4Bit}}
\\
 & & \makecell{qW } & qWD & \makecell{TLq} & \makecell{TLq-HS} & \\

\midrule
125M & 2.29392 & 2.57312 & 2.29274 & 2.30479 & 2.29528 & 2.29590 \\
350M & 2.08719 & 2.27405 & 2.08730 & 2.09551 & 2.08912 & 2.08964 \\
1.3B & 1.92774 & 2.04608 & 1.92881 & 1.95075 & 1.93134 & 1.93238 \\
\bottomrule
\end{tabular}
\end{adjustbox}
\end{minipage}
\vspace*{-0.3cm}
\end{table}

\subsection{Throughput Evaluation}
Next, we evaluate the improved E2E throughput, measured in FLOPS per second, of SDP4Bit on both hardware platforms. For all tests, the results are averaged over 10 iterations after 20 warm-up iterations. As shown in Table~\ref{tab:e2e-speedup}, SDP4Bit achieves an E2E training speedup of up to 4.08$\times$. For models with the same model parallel configuration (e.g., 1.3B and 2.7B; 13B and 18B), both the E2E throughput and speedup from SDP4Bit increase as the model size grows due to larger models having higher computational efficiency but also encountering increased communication overhead.

The throughput of the 1.3B, 2.7B, and 6.7B models across the two platforms indicates that SDP4Bit provides a more significant speedup when network bandwidth is lower. This is because lower bandwidth results in higher communication overhead, which SDP4Bit effectively reduces through efficient quantization techniques. 

\begin{table}[t]
\vspace{-0.6cm}
    \centering
    \begin{minipage}[b]{0.95\textwidth}
        \centering
        \caption{E2E throughput$\uparrow$ on different model sizes with std.}
        \vspace*{0.1cm}
        \centering
        \begin{adjustbox}{width=\linewidth}
        \begin{tabular}{c|rrr|rrr}
        \toprule
             & \multicolumn{3}{c|}{4xA100, 16 nodes (Slingshot 10)} & \multicolumn{3}{c}{8xH800, 16 nodes (InfiniBand)} \\ \midrule
        \begin{tabular}[c]{c}Model\\ Size\end{tabular} &
          \begin{tabular}[c]{c}Baseline\\ TFLOPs \end{tabular} &
          \begin{tabular}[c]{c}SDP4Bit\\ TFLOPs \end{tabular} &
          Speedup &
          \begin{tabular}[c]{c}Baseline\\ TFLOPs \end{tabular} &
          \begin{tabular}[c]{c}SDP4Bit\\ TFLOPs \end{tabular} & 
          Speedup
           \\ \midrule
        1.3B  & 24.1 $\pm$0.03 & 57.6 $\pm$0.03 & 2.39$\times$ & 69.1 $\pm$0.96 & 106.0 $\pm$2.66 & 1.53$\times$ \\
        2.7B  & 24.0 $\pm$0.00 & 58.4 $\pm$0.07 & 2.43$\times$ & 71.9 $\pm$0.56 & 116.9 $\pm$0.98 & 1.63$\times$ \\
        6.7B  & 10.8 $\pm$0.00 & 37.1 $\pm$0.00 & 3.44$\times$ & 26.2 $\pm$0.33 & 77.9 $\pm$2.43 & 2.97$\times$ \\
        13B  & 9.7 $\pm$0.04 & 26.0 $\pm$0.03 & 2.68$\times$ & 13.9 $\pm$0.17 & 53.5 $\pm$1.36 & 3.85$\times$ \\
        18B  & 10.2 $\pm$0.00 & 29.8 $\pm$0.04 & 2.92$\times$ & 14.5 $\pm$0.07 & 59.2 $\pm$1.37 & 4.08$\times$ \\
        \bottomrule
        \end{tabular}%
        \end{adjustbox}
        \label{tab:e2e-speedup}
    \end{minipage}
\end{table}

\begin{table}[t]
   \vspace{-0.3cm}
    \centering
    \begin{minipage}[b]{0.45\textwidth}
    \centering
    \captionof{table}{Final validation loss$\downarrow$ of GPT-125M with different group sizes.}
    \label{tab:bucket_size_effect}
    \begin{adjustbox}{width=\linewidth}
    \begin{tabular}{@{}c|cccc@{}}
    \toprule
    \multicolumn{2}{c|}{Baseline Val Loss} & \multicolumn{3}{c}{2.29392} \\ 
    \midrule[0.8pt]
    \multirow{2}{*}{TLq-HS} & Group Size & 64 & 128 & 512 \\ 
    & Val Loss & 2.29537 & 2.29528 & 2.29670 \\ 
    \midrule[0.8pt]
    \multirow{2}{*}{qWD} & Group Size & \multicolumn{3}{c}{1024 \quad \quad \quad 2048} \\ 
    & Val Loss & \multicolumn{3}{c}{2.29338 \quad \quad 2.29274} \\ 
    \midrule[0.8pt]
    \multirow{2}{*}{\makecell{qW}} & Group Size & 32 & 128 & 2048 \\ 
    & Val Loss & 2.39580 & 2.44712 & 2.57312 \\ 
    \bottomrule
    \end{tabular}
    \end{adjustbox}
    \end{minipage}
    \hspace{0.05\textwidth} 
    \begin{minipage}{0.45\textwidth}
    \vspace{-8mm}
    \centering
    \caption{Performance of Different Quantization Strategies on GPT-1.3B over 32 A100 with standard deviation.}
    \label{tab:grad_ablation}
    \begin{adjustbox}{width=\linewidth}
    \begin{tabular}{|c|r|r|}
    \hline
    \makecell{Quantization\\ Strategy}
     & \makecell{Grad. Comm.\\ Time (ms)} & \makecell{TFLOPs} \\
    \hline
    Baseline & 379.3 $\pm$0.34 & 24.4 $\pm$0.00 \\
    \hline
    TLq-HS & 45.9 $\pm$0.03 & 43.3 $\pm$0.05 \\
    \hline
    ULq & 45.0 $\pm$0.03 & 43.3 $\pm$0.04 \\
    \hline
    \textbf{SDP4Bit} & 45.8 $\pm$0.03 & 58.5 $\pm$0.07 \\
    \hline
    \makecell{SDP4Bit \\ (HS w/o fused)} & 64.6 $\pm$0.05 & 55.2 $\pm$0.07 \\
    \hline
    \end{tabular}
    \end{adjustbox}
    \end{minipage}
\vspace*{-0.9cm}
\end{table}

\begin{figure}[hb]
   \vspace*{-5mm}
    \centering
    \begin{minipage}[b]{0.63\textwidth}
        \includegraphics[width=\linewidth]{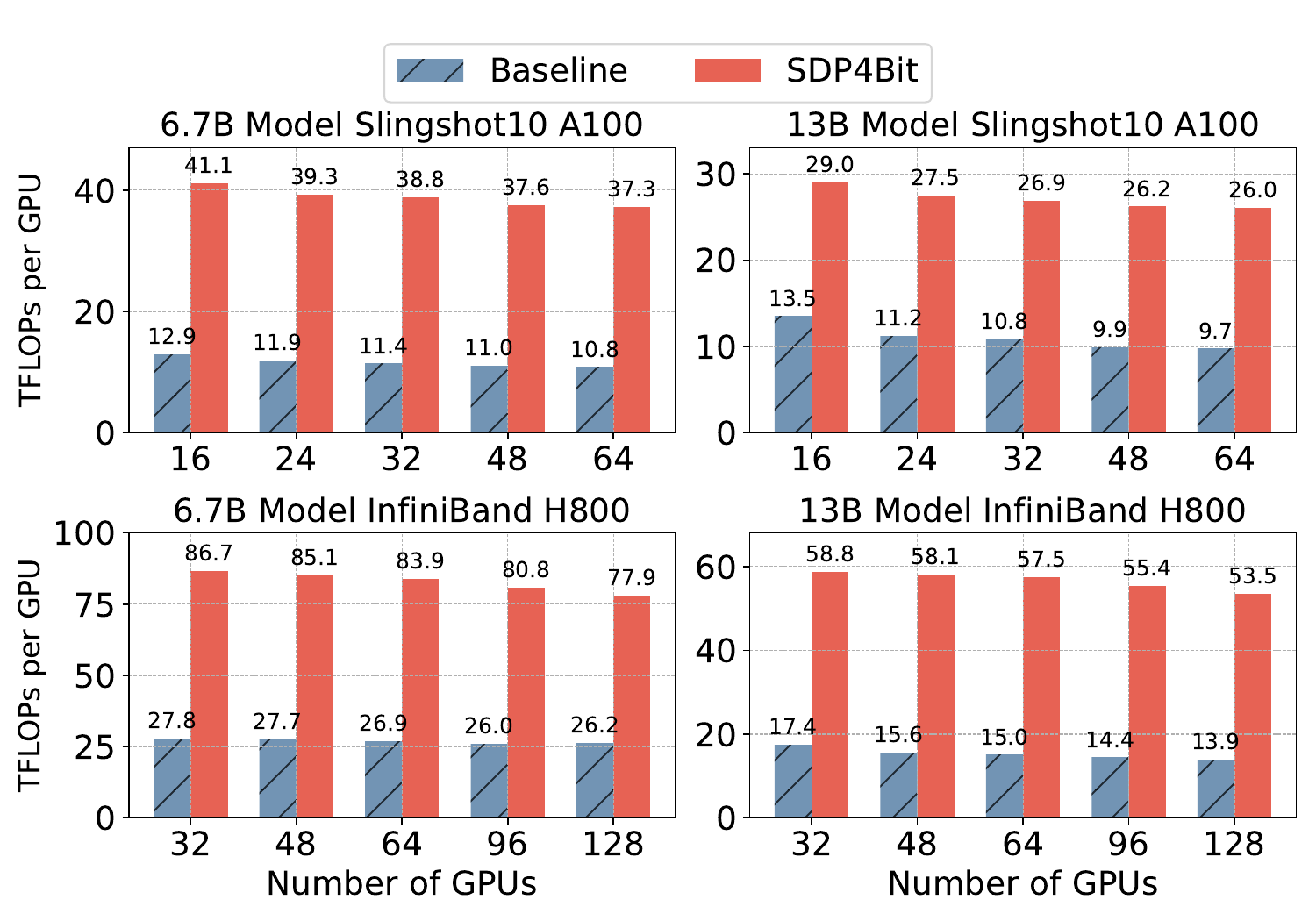}
        \vspace{-6mm}
        \caption{Scalability of SDP4Bit.}
        \label{fig:e2e_scalibility}
       \vspace{-3mm}
    \end{minipage}
    \hspace{5mm}
    \begin{minipage}{0.32\textwidth}
       \vspace*{-53mm}
        \centering
        \includegraphics[width=\linewidth]{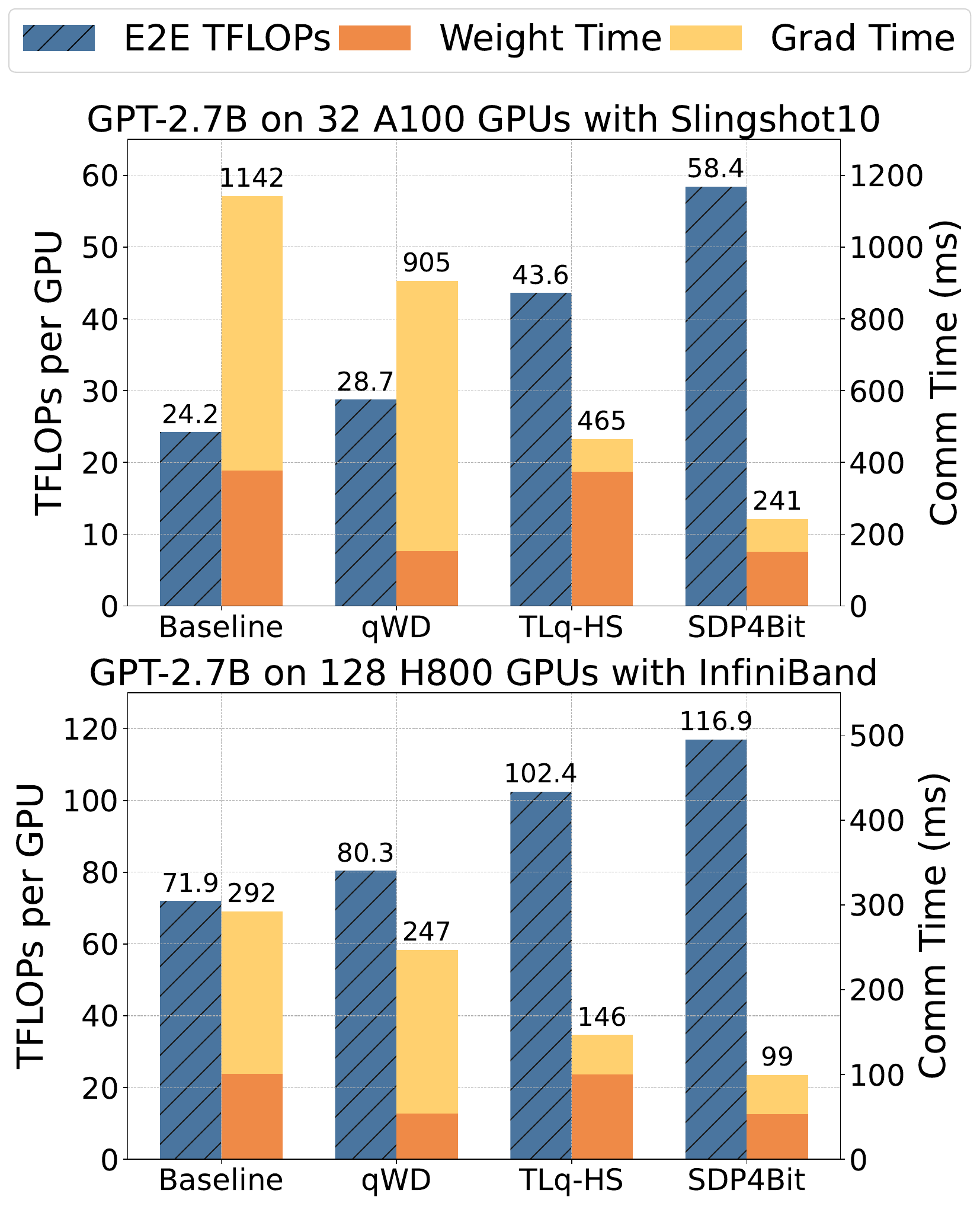}
        \captionof{figure}{Throughput breakdown of SDP4Bit on GPT-2.7B.}
        \label{fig:e2e_component_breakdown}
    \end{minipage}
\end{figure}

In addition, we demonstrate the scalability of SDP4Bit using GPT models of 6.7B and 13B parameters, with tests conducted on up to 128 GPUs, as shown in Figure~\ref{fig:e2e_scalibility}. Under low bandwidth conditions, SDP4Bit achieves an average speedup of 3.40$\times$ for the 6.7B model and 2.49$\times$ for the 13B model. In high-bandwidth InfiniBand environments, the speedup averages 3.08$\times$ for the 6.7B model and 3.73$\times$ for the 13B model. The comparatively lower speedup for the 13B model under low bandwidth conditions can be attributed to the introduction of pipeline parallelism, which diminishes the proportion of communication handled by ShardedDP. Overall, SDP4Bit consistently maintains stable speedup performance across various GPU numbers and network environments.

\subsection{Ablation Study}

\textbf{Components Breakdown.}
Figure~\ref{fig:e2e_component_breakdown} demonstrates the throughput improvement of qWD, TLq-HS, and their combination (SDP4Bit) on two different platforms. qWD alone provides a speedup ranging from 1.1$\times$ to 1.2$\times$, while TLq-HS alone results in an E2E speedup of 1.4$\times$ to 1.8$\times$. The notable benefit from gradient quantization stems from the high communication overhead associated with Float32 gradients in baseline training, which is higher compared to BFloat16 weights. When they are applied together, SDP4Bit achieves a more substantial speedup, ranging from 1.6$\times$ to 2.4$\times$.

\textbf{TLq-HS vs. ULq.}
Table~\ref{tab:grad_ablation} compares gradient quantization between TLq-HS and ULq. The results show that although TLq-HS employs 8-bit quantization for intra-node gradient communication, it introduces negligible overhead compared to 4-bit communication. This is due to 1) the high bandwidth of intra-node communication and 2) the fact that most intra-node communication is overlapped with the slower inter-node communication.

\textbf{Hadamard Kernel Fusion.}
Table~\ref{tab:grad_ablation} shows that, compared to the SDP4Bit without fusing Hadamard Transform kernel, our optimized SDP4Bit reduces gradient communication overhead by 29\%. Additionally, we provide a throughput comparison in Table~\ref{tab:quantization-throughput} to further illustrate the impact of the Hadamard transformation. The results confirm that our Hadamard kernel fusion effectively reduces the overhead, making the transformation nearly zero-overhead and even matching the performance of quantization without the Hadamard transformation.

\textbf{Convergence with Different Group Sizes.}
Table~\ref{tab:bucket_size_effect} examines the impact of various quantization granularities on the end-to-end validation loss during the pre-training of the GPT-125M model. For TLq-HS, a gradient quantization group size of 128 presents sufficient, with smaller sizes yielding no significant accuracy improvements. For qWD, a quantization group size of 2048 achieves training accuracy comparable to the baseline.
Table~\ref{tab:bucket_size_effect} also presents the 4-bit weight quantization (\textit{qW}) while using small group size. It is evident that even with very small group size (e.g., 32), direct 4-bit quantization leads to a significant gap in accuracy compared to the baseline, making 4-bit quantization for weights suboptimal.

\section{Related Work}

Apart from ZeRO++~\cite{wang2023zero} and QSDP~\citep{Markov2023QuantizedDT}, which are specifically designed for communication compression in ShardedDP, most previous studies have focused on traditional DP, primarily utilizing gradient compression. This includes both unbiased compression techniques~\citep{Alistarh2016QSGDCS,wang2018atomo,xin2023global,faghri2020adaptive}, which employ randomized compressors, and biased compression methods with error compensation~\citep{karimireddy2019error,vogels2019powersgd,tang2019doublesqueeze,tang20211,richtarik2021ef21} that require extra storage for residual errors, making them less suitable for resource-intensive training of LLMs. Other strategies like local optimization or federated learning reduce communication frequency rather than volume~\citep{lin2019don,stich2019local,woodworth2020local,woodworth2020minibatch,basu2019qsparse,nadiradze2021asynchronous}, but increase memory use, complicating their application in LLM training. In addition, techniques like low-precision training~\citep{micikevicius2017mixed,peng2023fp8lm} and parameter-efficient fine-tuning~\citep{hu2021lora,dettmers2024qlora,kim2024memory} minimize the volume of trainable variables to reduce communication.
In a different vein, weight quantization for inference has also been explored~\citep{frantargptq,frantar2023sparsegpt,yao2022zeroquant,xiao2023smoothquant,dong2024packqvit}, employing more resource-intensive methods compared to those used in training to fine-tune compression parameters.

The Hadamard transform has been applied to machine learning data, as seen in HQ-MM’s~\citep{xi2023training} compression of activations and THC’s~\citep{li2024thc} gradient communication within a parameter server framework. Unlike THC, SDP4Bit enhances collective communication operations and GPU optimization.

\section{Conclusion}
In this paper, we propose SDP4Bit, a communication reduction strategy for Sharded Data Parallelism. SDP4Bit reduces both weight and gradient communication to nearly 4 bits while maintaining model accuracy comparable to the baseline. We implemented SDP4Bit in Megatron-LM and optimized it to reduce quantization overhead. Specifically, our experimental results demonstrate a training speedup of up to 4.08 $\times$ on 128 GPUs. This paper focuses on LLM pre-training, but we plan to extend our work to other models and areas such as MoE, computer vision, and fine-tuning in the future.

\bibliographystyle{plain}
\bibliography{reference}

\appendix
\newpage
\appendix
\onecolumn

\newcommand{\hanlin}[1]{\textcolor[rgb]{1,0,1}{[\textbf{Hanlin: #1}]}}

\vspace*{0.1cm}
\begin{center}
	\Large\textbf{Appendix}
\end{center}
\vspace*{0.1cm}

\setcounter{theorem}{0}

\section{Proofs}
\label{sec:proofs}

We use the following lemma (simplified from \cite{Stich2019TheEF}, Lemma 14) without proof.

\begin{lemma}
\label{lem:seq_converge}
For every non-negative sequence $\{r_t\}_{t \geq 0}$ and any parameters $a \geq 0$, $c \geq 0$, $T \geq 0$, there exists a constant $\eta \leq \frac{1}{a}$, such that
\begin{align*}
\frac{1}{T+1} \sum_{t=0}^T \left( \frac{r_t - r_{t+1}}{\eta} + c \eta \right) 
= \frac{1}{T+1} \frac{r_0 - r_{T+1}}{\eta} + c \eta
\leq \frac{a r_0}{T+1} + \frac{2\sqrt{c r_0}}{\sqrt{T + 1}}.
\end{align*}
\end{lemma}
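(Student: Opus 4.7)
The plan is to treat this as a standard "balance two terms by choosing the step size" lemma, of the kind that appears throughout the convex optimization and SGD literature. The middle equality is pure bookkeeping: the sum $\sum_{t=0}^T (r_t - r_{t+1})$ telescopes to $r_0 - r_{T+1}$, and $c\eta$, being constant in $t$, averages to itself. That takes one line and no hypotheses beyond the definition of the sum. Next, since $r_{T+1} \geq 0$ by the non-negativity assumption on the sequence, I would discard it to obtain
\begin{align*}
\frac{r_0 - r_{T+1}}{\eta(T+1)} + c\eta \;\leq\; \frac{r_0}{\eta(T+1)} + c\eta.
\end{align*}
The rest of the proof is to exhibit a single $\eta \in (0, 1/a]$ for which the right-hand side is at most $a r_0/(T+1) + 2\sqrt{c r_0/(T+1)}$.

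The main step is the choice of $\eta$. The unconstrained minimizer of $\eta \mapsto r_0/(\eta(T+1)) + c\eta$ is $\eta^* = \sqrt{r_0 / (c(T+1))}$, with minimum value $2\sqrt{c r_0/(T+1)}$ by AM-GM applied to the two terms. I would set $\eta = \min\{1/a,\ \eta^*\}$ and split into cases. Case one: if $\eta^* \leq 1/a$, then $\eta = \eta^*$ is feasible and achieves exactly $2\sqrt{c r_0/(T+1)}$, which is bounded by the claimed expression since the extra $a r_0/(T+1)$ summand is non-negative. Case two: if $\eta^* > 1/a$, take $\eta = 1/a$, giving value $a r_0/(T+1) + c/a$; here the defining inequality $\sqrt{r_0/(c(T+1))} > 1/a$ rearranges to $c/a < \sqrt{c r_0/(T+1)}$, which is comfortably smaller than the $2\sqrt{c r_0/(T+1)}$ term on the RHS. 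Either way the inequality holds.

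I do not expect a serious obstacle; the proof is essentially the canonical balancing argument, with the $1/a$ cap only mildly complicating things via the case split. The only nuisance is the degenerate parameter regimes: if $c=0$, the unconstrained minimizer is undefined and I would simply take $\eta = 1/a$, recovering $a r_0/(T+1)$; if $r_0 = 0$, I would use $r_0 - r_{T+1} \leq 0$ to drop the first term outright and then send $\eta \to 0^+$ (or take $\eta = 1/a$ and note the bound is trivially $0 \leq 0$). Neither degeneracy requires new ideas. The lemma is essentially a simplified restatement of Lemma~14 of \cite{Stich2019TheEF}, so I would also verify that my argument matches the form of the bound they state, to ensure no hidden factor is being absorbed elsewhere.
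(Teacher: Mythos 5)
The paper does not prove this lemma at all---it is imported from \cite{Stich2019TheEF} (Lemma 14) without proof---and your proposal is precisely the standard argument behind it: the telescoping equality, discarding $r_{T+1}\geq 0$, and choosing $\eta=\min\{1/a,\sqrt{r_0/(c(T+1))}\}$ with the two-case balance (the capped case using $1/a<\sqrt{r_0/(c(T+1))}\Rightarrow c/a<\sqrt{cr_0/(T+1)}$), all of which is correct. The only imprecise aside is the $r_0=0$, $c>0$ corner, where taking $\eta=1/a$ gives $c/a$ on the left rather than ``$0\leq 0$'' and no positive $\eta$ works if $r_{T+1}=0$; this is an artifact of the simplified statement rather than of your argument, and it is immaterial to its use in Theorem~\ref{thm:convergence}, where $a>0$ and $r_0=f(w_0)-f^*$.
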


\errboundthm*
\begin{proof}

By using smoothness~(Assumption~\ref{asm:smoothness}), we have
\begin{align*}
f(w_{t+1}) 
\leq f(w_t) - \eta \ip{\nabla f(w_t)}{\UM_g(g_t)} + \frac{\eta^2 L}{2} \| \UM_g(g_t) \|^2.
\end{align*}

Taking expectation w.r.t. the random compressor $\UM_g$, we have
\begin{align*}
&\E_{gc}[f(w_{t+1})] \\
&\leq f(w_t) - \eta \ip{\nabla f(w_t)}{g_t} + \frac{\eta^2 L}{2} \E_{gc}\| \UM_g(g_t) \|^2 \\
&= f(w_t) - \eta \ip{\nabla f(w_t)}{g_t} + \frac{\eta^2 L}{2} [\| g_t \|^2 + \E_{gc}\| \UM_g(g_t) - g_t \|^2] \\
&\leq f(w_t) - \eta \ip{\nabla f(w_t)}{g_t} + \frac{\eta^2 L (\kappa+1)}{2} \| g_t \|^2.
\end{align*}

Conditional on $w_t$, taking expectation on the random sample $\zeta_t$, we have
\begin{align*}
&\E_{\zeta}[\E_{gc}[f(w_{t+1})]] \\
&\leq f(w_t) - \eta \ip{\nabla f(w_t)}{\nabla f(\tilde{w}_t)} + \frac{\eta^2 L (\kappa+1)}{2} \E_\zeta \| g_t \|^2 \\
&= f(w_t) - \eta \ip{\nabla f(w_t)}{\nabla f(\tilde{w}_t)} + \frac{\eta^2 L (\kappa+1)}{2} \E_\zeta \| g_t - \nabla f(\tilde{w}_t) + \nabla f(\tilde{w}_t) \|^2 \\
&= f(w_t) - \eta \ip{\nabla f(w_t)}{\nabla f(\tilde{w}_t)} + \frac{\eta^2 L (\kappa+1)}{2} \E_\zeta [ \| g_t - \nabla f(\tilde{w}_t) \|^2 + \| \nabla f(\tilde{w}_t) \|^2 ] \\
&\leq f(w_t) - \eta \ip{\nabla f(w_t)}{\nabla f(\tilde{w}_t)} + \frac{\eta^2 L (\kappa+1) (\rho + 1)}{2} \| \nabla f(\tilde{w}_t) \|^2 + \frac{\eta^2 L (\kappa+1) \sigma^2}{2} \\
&\leq f(w_t) - \eta \ip{\nabla f(w_t)}{\nabla f(\tilde{w}_t)} + \frac{\eta^2 L (\kappa+1) (\rho + 1)}{2} \| \nabla f(\tilde{w}_t) \|^2 + \frac{\eta^2 L (\kappa+1) \sigma^2}{2} \\
&= f(w_t) - \eta \ip{\nabla f(w_t) - \nabla f(\tilde{w}_t) + \nabla f(\tilde{w}_t)}{\nabla f(\tilde{w}_t)} + \frac{\eta^2 L (\kappa+1) (\rho + 1)}{2} \| \nabla f(\tilde{w}_t) \|^2 \\
&\quad + \frac{\eta^2 L (\kappa+1) \sigma^2}{2} \\
&= f(w_t) - \eta \ip{\nabla f(w_t) - \nabla f(\tilde{w}_t)}{\nabla f(\tilde{w}_t)} - \eta \| \nabla f(\tilde{w}_t) \|^2 + \frac{\eta^2 L (\kappa+1) (\rho + 1)}{2} \| \nabla f(\tilde{w}_t) \|^2 \\
&\quad + \frac{\eta^2 L (\kappa+1) \sigma^2}{2} \\
&= f(w_t) 
- \eta \left( 1 - \frac{\eta L (\kappa+1) (\rho + 1)}{2} \right) \| \nabla f(\tilde{w}_t) \|^2
- \eta \ip{\nabla f(w_t) - \nabla f(\tilde{w}_t)}{\nabla f(\tilde{w}_t)} \\
&\quad + \frac{\eta^2 L (\kappa+1) \sigma^2}{2} \\
&\leq f(w_t) 
- \eta \left( 1 - \frac{\eta L (\kappa+1) (\rho + 1)}{2} \right) \| \nabla f(\tilde{w}_t) \|^2
+ \frac{\eta}{2} \| \nabla f(w_t) - \nabla f(\tilde{w}_t) \|^2 \\
&\quad + \frac{\eta}{2} \| \nabla f(\tilde{w}_t) \|^2
+ \frac{\eta^2 L (\kappa+1) \sigma^2}{2} 
\mathcomment{$\ip{a}{b} \leq \frac{1}{2} \|a\|^2 + \frac{1}{2} \|b\|^2$} \\
&= f(w_t) 
- \frac{\eta}{2} \left[ 1 - \eta L (\kappa+1) (\rho + 1) \right] \| \nabla f(\tilde{w}_t) \|^2
+ \frac{\eta}{2} \| \nabla f(w_t) - \nabla f(\tilde{w}_t) \|^2
+ \frac{\eta^2 L (\kappa+1) \sigma^2}{2}.
\end{align*}

Again using smoothness, and taking $\eta \leq \frac{1}{2L (\rho + 1) (\kappa + 1)}$, we have $- \frac{\eta}{2} \left[ 1 - \eta L (\kappa+1) (\rho + 1) \right] \leq - \frac{\eta}{4}$, and, we have
\begin{align*}
&\E_{\zeta}[\E_{gc}[f(w_{t+1})]] \\
&\leq f(w_t) 
- \frac{\eta}{2} \left[ 1 - \eta L (\kappa+1) (\rho + 1) \right] \| \nabla f(\tilde{w}_t) \|^2
+ \frac{\eta L^2}{2} \| w_t - \tilde{w}_t \|^2
+ \frac{\eta^2 L (\kappa+1) \sigma^2}{2} \\
&\leq f(w_t) 
- \frac{\eta}{2} \left[ 1 - \eta L (\kappa+1) (\rho + 1) \right] \| \nabla f(\tilde{w}_t) \|^2
+ \frac{\eta L^2}{2} \| e_t \|^2
+ \frac{\eta^2 L (\kappa+1) \sigma^2}{2} \\
&\leq f(w_t) 
- \frac{\eta}{4} \| \nabla f(\tilde{w}_t) \|^2
+ \frac{\eta L^2}{2} \| e_t \|^2
+ \frac{\eta^2 L (\kappa+1) \sigma^2}{2},
\end{align*}
where we define the sequence
\begin{align*}
e_t = w_t - \tilde{w}_t, e_0 = 0.
\end{align*}

Now we establish the upper bound of the sequence $\|e_t\|^2$ as follows. 

First, using $w_{t+1} = w_t - \eta \UM_g(g_t)$ and $\tilde{w}_{t+1} = \tilde{w}_t + \CM_w(w_{t+1} - \tilde{w}_t)$, we have the following equations:
\begin{align*}
w_{t+1} - \tilde{w}_{t+1} 
= e_{t+1} 
= w_t - \tilde{w}_t - \eta \UM_g(g_t) - \CM_w(w_{t+1} - \tilde{w}_t)
= e_t - \eta \UM_g(g_t) - \CM_w(e_t - \eta \UM_g(g_t))
\end{align*}

Taking expectation w.r.t. the random compressor $\CM_w$, we have
\begin{align*}
&\E_{wc}[\| e_{t+1} \|^2] \\
&= \E_{wc}[\| e_t - \eta \UM_g(g_t) - \CM_w(e_t - \eta \UM_g(g_t)) \|^2] \\
&\leq (1-\delta) \| e_t - \eta \UM_g(g_t) \|^2.
\end{align*}

Taking expectation w.r.t. the random compressor $\UM_g$, we have
\begin{align*}
&\E_{gc}[\E_{wc}[\| e_{t+1} \|^2]] \\
&\leq (1-\delta) \E_{gc}[ \| e_t - \eta \UM_g(g_t) \|^2 ] \\
&= (1-\delta) \E_{gc}[ \| e_t - \eta g_t + \eta g_t - \eta \UM_g(g_t) \|^2 ] \\
&= (1-\delta) \| e_t - \eta g_t \|^2 + (1-\delta) \eta^2 \E_{gc}[ \| g_t - \UM_g(g_t) \|^2 ] \\
&\leq (1-\delta) \| e_t - \eta g_t \|^2 + (1-\delta) \eta^2 \kappa \| g_t \|^2.
\end{align*}

Conditional on $w_t$, taking expectation on the random sample $\zeta_t$, we have
\begin{align*}
&\E_{\zeta}[\E_{gc}[\E_{wc}[\| e_{t+1} \|^2]]] \\
&\leq (1-\delta) \E_{\zeta}[ \| e_t - \eta \nabla f(\tilde{w}_t) + \eta \nabla f(\tilde{w}_t)  - \eta g_t \|^2 ] + (1-\delta) \eta^2 \kappa \E_{\zeta}[ \| g_t - \nabla f(\tilde{w}_t) + \nabla f(\tilde{w}_t) \|^2 ] \\
&= (1-\delta) \| e_t - \eta \nabla f(\tilde{w}_t) \|^2 
+ (1-\delta) (\kappa + 1) \eta^2 \E_{\zeta}[ \| g_t - \nabla f(\tilde{w}_t) \|^2 ]
+ (1-\delta) \eta^2 \kappa \| \nabla f(\tilde{w}_t) \|^2\\
&\leq (1-\delta) \| e_t - \eta \nabla f(\tilde{w}_t) \|^2 
+ (1-\delta) (\kappa + 1) \eta^2 ( \rho \| \nabla f(\tilde{w}_t) \|^2 + \sigma^2 )
+ (1-\delta) \eta^2 \kappa \| \nabla f(\tilde{w}_t) \|^2\\
&= (1-\delta) \| e_t - \eta \nabla f(\tilde{w}_t) \|^2 
+ (1-\delta) \eta^2 (\rho \kappa + \rho + \kappa) \| \nabla f(\tilde{w}_t) \|^2
+ (1-\delta) (\kappa + 1) \eta^2 \sigma^2.
\end{align*}

With $\forall b > 0$, we have
\begin{align*}
&\E_{\zeta}[\E_{gc}[\E_{wc}[\| e_{t+1} \|^2]]] \\
&\leq (1-\delta) (1+b) \| e_t \|^2 
+ (1-\delta) (1+b^{-1}) \| \eta \nabla f(\tilde{w}_t) \|^2 
+ (1-\delta) \eta^2 (\rho \kappa + \rho + \kappa) \| \nabla f(\tilde{w}_t) \|^2 \\
&\quad + (1-\delta) (\kappa + 1) \eta^2 \sigma^2 
\\
&= (1-\delta) (1+b) \| e_t \|^2 
+ (1-\delta) \eta^2 [1+b^{-1} + (\rho \kappa + \rho + \kappa)] \| \nabla f(\tilde{w}_t) \|^2 
+ (1-\delta) (\kappa + 1) \eta^2 \sigma^2.
\end{align*}

Then, by taking $b = \frac{\delta}{2(1-\delta)}$, we have $(1-\delta)(1+b)=1-\frac{\delta}{2}$ 
, $1+b^{-1} = \frac{2-\delta}{\delta} \leq \frac{2}{\delta}$, and 
\begin{align*}
&\E_{\zeta}[\E_{gc}[\E_{wc}[\| e_{t+1} \|^2]]] \\
&\leq (1-\frac{\delta}{2})  \| e_t \|^2 
+ (1-\delta) \eta^2 \left(\frac{2}{\delta} + \rho \kappa + \rho + \kappa \right) \| \nabla f(\tilde{w}_t) \|^2 
+ (1-\delta) (\kappa + 1) \eta^2 \sigma^2.
\end{align*}

We simplify the notation by denoting $\E[\| e_{t+1} \|^2] = \E_{\zeta}[\E_{gc}[\E_{wc}[\| e_{t+1} \|^2]]]$, and then unroll the sequence of $e_t$ back to $t=0$.
\begin{align*}
&\E[\| e_{t+1} \|^2] \\
&\leq \sum_{\tau=0}^t (1-\frac{\delta}{2})^{t-\tau} \left[ (1-\delta) \eta^2 \left(\frac{2}{\delta} + \rho \kappa + \rho + \kappa \right) \| \nabla f(\tilde{w}_\tau) \|^2 
+ (1-\delta) (\kappa + 1) \eta^2 \sigma^2 \right] \\
&\leq (1-\delta) \eta^2 \left(\frac{2}{\delta} + \rho \kappa + \rho + \kappa \right) \sum_{\tau=0}^t (1-\frac{\delta}{2})^{t-\tau} \| \nabla f(\tilde{w}_\tau) \|^2
+ (1-\delta) (\kappa + 1) \eta^2 \sigma^2 \sum_{\tau=0}^t (1-\frac{\delta}{2})^{t-\tau} \\
&\leq (1-\delta) \eta^2 \left(\frac{2}{\delta} + \rho \kappa + \rho + \kappa \right) \sum_{\tau=0}^t (1-\frac{\delta}{2})^{t-\tau} \| \nabla f(\tilde{w}_\tau) \|^2
+ \frac{2(1-\delta) (\kappa + 1) \eta^2 \sigma^2}{\delta}.
\mathcomment{$\sum_{\tau=0}^t (1-\frac{\delta}{2})^{t-\tau} \leq \frac{1}{1 - (1-\frac{\delta}{2})}$}
\end{align*}

Taking $\eta \leq \frac{1}{10L \left(\frac{2}{\delta} + \rho \kappa + \rho + \kappa \right)}$, we have
\begin{align*}
&\E[\| e_{t+1} \|^2] \\
&\leq \frac{1-\delta}{100 L^2 \left(\frac{2}{\delta} + \rho \kappa + \rho + \kappa \right)} \sum_{\tau=0}^t (1-\frac{\delta}{2})^{t-\tau} \| \nabla f(\tilde{w}_\tau) \|^2
+ \frac{2(1-\delta) (\kappa + 1) \eta \sigma^2}{\delta 10L \left(\frac{2}{\delta} + \rho \kappa + \rho + \kappa \right)} \\
&\leq \frac{1-\delta}{100 L^2 \frac{2}{\delta}} \sum_{\tau=0}^t (1-\frac{\delta}{2})^{t-\tau} \| \nabla f(\tilde{w}_\tau) \|^2
+ \frac{2(1-\delta) (\kappa + 1) \eta \sigma^2}{\delta 10L \frac{2}{\delta}} \\
&\leq \frac{(1-\delta) \delta}{200 L^2} \sum_{\tau=0}^t (1-\frac{\delta}{2})^{t-\tau} \| \nabla f(\tilde{w}_\tau) \|^2
+ \frac{(1-\delta) (\kappa + 1) \eta \sigma^2}{10L}.
\end{align*}

Then, stacking $\E[\| e_{t} \|^2]$ and taking total expectation, we have
\begin{align*}
&\sum_{t=0}^T \E[\| e_{t+1} \|^2] \\
&\leq \frac{(1-\delta) \delta}{200 L^2} \sum_{t=0}^T \sum_{\tau=0}^t (1-\frac{\delta}{2})^{t-\tau} \| \nabla f(\tilde{w}_\tau) \|^2
+ \frac{(T+1)(1-\delta) (\kappa + 1) \eta \sigma^2}{10L} \\
&\leq \frac{(1-\delta) \delta}{200 L^2} \sum_{t=0}^T \left[\sum_{\tau=0}^{+\infty} (1-\frac{\delta}{2})^{\tau}\right] \| \nabla f(\tilde{w}_t) \|^2
+ \frac{(T+1)(1-\delta) (\kappa + 1) \eta \sigma^2}{10L} \\
&\leq \frac{1-\delta}{100 L^2} \sum_{t=0}^T \| \nabla f(\tilde{w}_t) \|^2
+ \frac{(T+1)(1-\delta) (\kappa + 1) \eta \sigma^2}{10L}.
\end{align*}

Putting all the ingredients together and taking total expectation, we have
\begin{align*}
&\sum_{t=0}^T \E[f(w_{t+1})] \\
&\quad \leq \sum_{t=0}^T \E[f(w_t)]
- \frac{\eta}{4} \sum_{t=0}^T \E[ \| \nabla f(\tilde{w}_t) \|^2 ]
+ \frac{\eta L^2}{2} \sum_{t=0}^T \E[ \| e_t \|^2 ]
+ \frac{(T+1) \eta^2 L (\kappa+1) \sigma^2}{2} \\
\Rightarrow \quad & \E[f(w_{T+1})] \\
&\quad \leq \E[f(w_0)]
- \frac{\eta}{4} \sum_{t=0}^T \E[ \| \nabla f(\tilde{w}_t) \|^2 ]
+ \frac{\eta L^2}{2} \sum_{t=0}^T \E[ \| e_t \|^2 ]
+ \frac{(T+1) \eta^2 L (\kappa+1) \sigma^2}{2} \\
\Rightarrow \quad & \E[f(w_{T+1})] \\
&\quad \leq \E[f(w_0)]
- \frac{\eta}{4} \sum_{t=0}^T \E[ \| \nabla f(\tilde{w}_t) \|^2 ]
+ \frac{\eta L^2}{2} \sum_{t=0}^T \E[ \| e_t \|^2 ]
+ \frac{(T+1) \eta^2 L (\kappa+1) \sigma^2}{2} \\
\Rightarrow \quad & \E[f(w_{T+1})] \\
&\quad \leq \E[f(w_0)]
- \frac{\eta}{4} \sum_{t=0}^T \E[ \| \nabla f(\tilde{w}_t) \|^2 ]
+ \frac{(1-\delta)\eta}{200 } \sum_{t=0}^T \| \nabla f(\tilde{w}_t) \|^2 \\
&\quad + \frac{(T+1)(1-\delta) (\kappa + 1) L \eta^2 \sigma^2}{20}
+ \frac{(T+1) \eta^2 L (\kappa+1) \sigma^2}{2} \\
\Rightarrow \quad & \E[f(w_{T+1})] \leq \E[f(w_0)]
- \frac{\eta}{8} \sum_{t=0}^T \E[ \| \nabla f(\tilde{w}_t) \|^2 ]
+ \frac{(T+1)(11-\delta) (\kappa + 1) L \eta^2 \sigma^2}{20} \\
\Rightarrow \quad & \frac{1}{8(T+1)} \sum_{t=0}^T \E[ \| \nabla f(\tilde{w}_t) \|^2 ] 
\leq \frac{1}{T+1} \frac{\E[f(w_0)] - \E[f(w_{T+1})]}{\eta}
+ \frac{(11-\delta) (\kappa + 1) L \eta \sigma^2}{20}
\end{align*}

Finally, using Lemma~\ref{lem:seq_converge}, we have
\begin{align*}
&\frac{1}{T+1} \sum_{t=0}^T \E[ \| \nabla f(\tilde{w}_t) \|^2 ] \\
&\leq \frac{8}{T+1} \frac{\E[f(w_0)] - f^* + f^* - \E[f(w_{T+1})]}{\eta}
+ \frac{8(11-\delta) (\kappa + 1) L \eta \sigma^2}{20} \\
&\leq \frac{80L \left(\frac{2}{\delta} + \rho \kappa + \rho + \kappa \right) (f(w_0) - f^*)}{T+1} + 4\sigma \sqrt{ \frac{ (11-\delta) (\kappa + 1) L (f(w_0) - f^*)}{T + 1} }.
\end{align*}

\end{proof}

\section{Other Evaluation Results}

To further demonstrate the effectiveness of SDP4Bit in enhancing training efficiency, we present the relationship between wall clock time and training loss in Figure~\ref{fig:int4vsTLG-time-wall}.
\begin{figure}[h]
\centering
\begin{minipage}{0.85\textwidth}
    \centering
    \includegraphics[width=\textwidth]{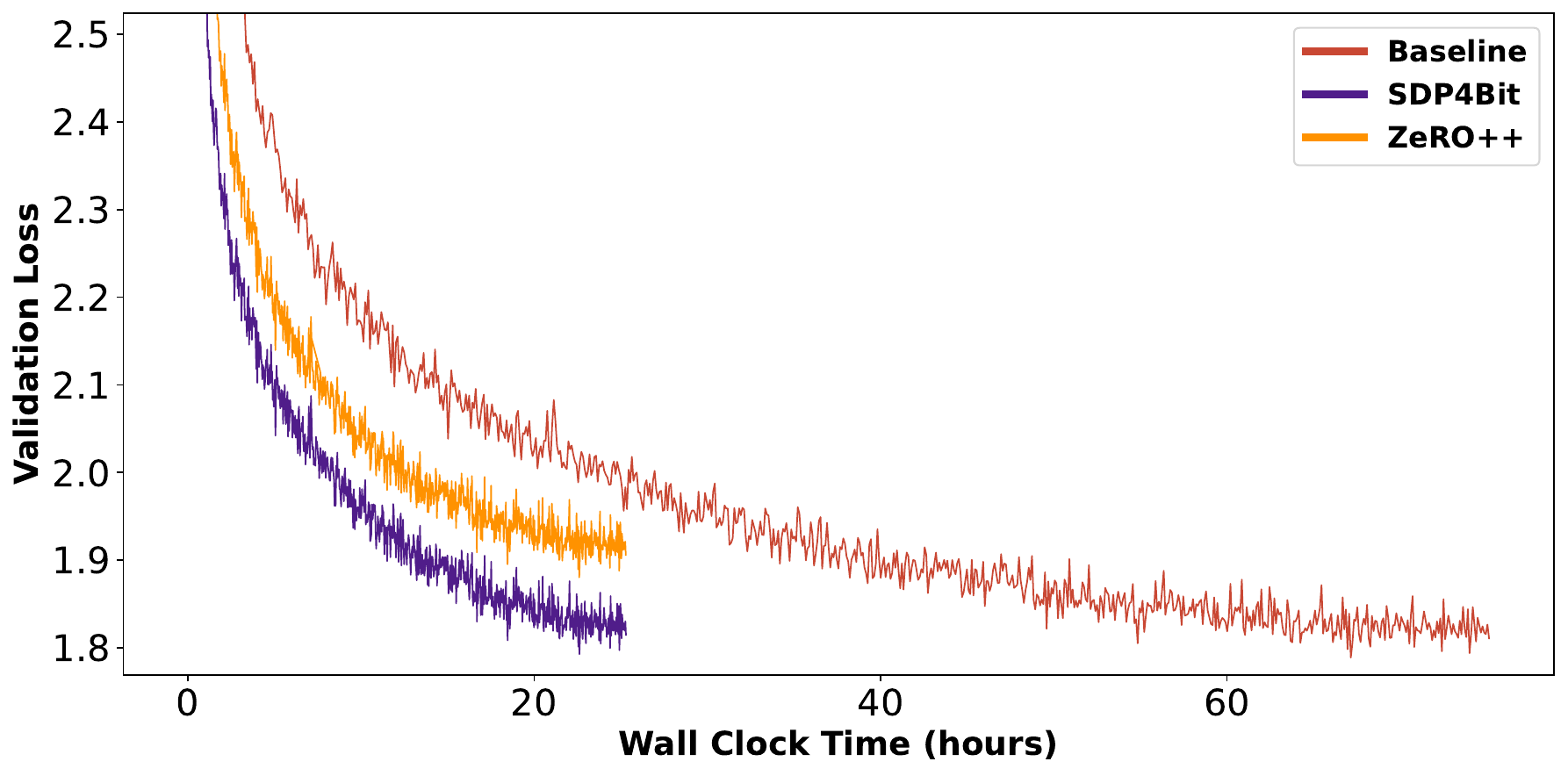}
    \caption{Comparison of validation loss versus wall-clock time for Baseline, ZeRO++ and SDP4Bit on the GPT-6.7B model.}
    \label{fig:int4vsTLG-time-wall}
\end{minipage}
\end{figure}

To further illustrate the impact of the Hadamard transformation on (de)quantization performance, we provide (de)quantization throughput experiment in Table~\ref{tab:quantization-throughput}, which is tested on an A100 GPU.

\begin{table}[h!]
    \vspace*{-0.3cm}
    \centering
    \begin{adjustbox}{width=0.8\linewidth}
    \begin{tabular}{r|ll|ll}
    \toprule
    \multirow{2}{*}{\textbf{Input/Output Size}} & \multicolumn{2}{c|}{\textbf{Quantization}} & \multicolumn{2}{c}{\textbf{Dequantization}} \\ 
    & \textbf{w/o Had.} & \textbf{w/ Had.} & \textbf{w/o Had.} & \textbf{w/ Had.} \\
    \midrule
    8 MB   & 305.6$\pm$10.9 & 301.8$\pm$10.6 & 367.7$\pm$10.6 & 359.6$\pm$9.6 \\
    16 MB  & 389.0$\pm$12.8 & 387.1$\pm$\phantom{0}8.2 & 428.0$\pm$10.6 & 428.6$\pm$7.6 \\
    64 MB  & 494.8$\pm$\phantom{0}3.7  & 493.7$\pm$\phantom{0}2.6  & 505.3$\pm$\phantom{0}2.1  & 505.6$\pm$2.2 \\
    512 MB & 682.1$\pm$\phantom{0}0.8  & 681.6$\pm$\phantom{0}1.2  & 685.1$\pm$\phantom{0}0.8  & 685.2$\pm$0.6 \\
    1024 MB & 686.5$\pm$\phantom{0}1.2  & 686.3$\pm$\phantom{0}0.4  & 688.0$\pm$\phantom{0}0.3  & 688.0$\pm$0.3 \\
    2048 MB & 688.6$\pm$\phantom{0}0.2  & 688.6$\pm$\phantom{0}0.2  & 689.5$\pm$\phantom{0}0.2  & 689.4$\pm$0.2 \\
    \bottomrule
    \end{tabular}
    \end{adjustbox}
    \vspace*{2mm}
    \caption{(De)quantization Throughput with/without Hadamard, including std. dev.}
    \label{tab:quantization-throughput}
\end{table}

\vspace{-0.8cm}
\section{Notations in Training}

\begin{table}[h!]
\vspace{-0.5cm}
\centering
\begin{small}
\begin{tabular}{|l|l|} \hline
qW        & original ZeRO++ int4 \textit{weight} quantization                \\ \hline
qWD        & weight difference int4 quantization    \\ \hline
ULq & original ZeRO++ uniform-level Int4-Int4 all-to-all \textit{gradient} quantization \\ \hline
TLq       & two-level Int8-Int4 all-to-all \textit{gradient} quantization       \\ \hline
TLq-HS    & two-level Int8-Int4 all-to-all \textit{gradient} quantization with Hadamard Smoother \\ \hline
\end{tabular}
\caption{Notations in experiments.}
\end{small}
\end{table}

\section{Detailed Training Settings}
\label{sec:detailed_training_settings}

In the experimental section, we utilize a total of six different sizes of GPT models. Their model configurations are detailed in Table~\ref{tab:model_size_params}.

For the accuracy experiments, we standardize the batch size to 256, and set sequence length to 2048. We use AdamW~\citep{loshchilov2017decoupled} optimizer in all the experiments. The detailed training parameters are listed in Table~\ref{tab:e2e_params}.

In the throughput experiments, to more clearly study the communication bottleneck and ensure consistency across different GPU counts, we set the accumulation step to 1. The batch size is adjusted according to the number of GPUs, and the sequence length (micro batch) is uniformly set to 2048. Due to the different number of GPUs per node in the two architectures, we adjusted the tensor parallel size (TP) and pipeline parallel size (PP) accordingly, referencing \cite{shoeybi2020megatronlm}, to achieve the highest throughput. Specifically, the maximum tensor parallel size is 4 for the 4xA100 environment and 8 for the 8xH800 environment. See detailed parameters in Table~\ref{tab:parallel_params}.

\begin{table}[h]
\centering
\begin{minipage}[b]{0.60\textwidth}
\caption{Model Size Parameters}
\label{tab:model_size_params}
    \begin{center}
    \begin{adjustbox}{width=0.9\linewidth}
    \begin{tabular}{|c|c|c|c|}
    \hline
    Model Size & Sequence Length & Hidden Size & Layers \\ \hline
    125M & 2048 & 768 & 12 \\ \hline
    350M & 2048 & 1024 & 24 \\ \hline
    1.3B & 2048 & 2048 & 24 \\ \hline
    6.7B & 2048 & 4096 & 32 \\ \hline
    13B & 2048 & 5120 & 40 \\ \hline
    18B & 2048 & 6144 & 40 \\ \hline
    \end{tabular}
    \end{adjustbox}
    \end{center}
\end{minipage}
\begin{minipage}[b]{0.39\textwidth}
\caption{Parallel Configuration for Throughput Test}
\label{tab:parallel_params}
    \begin{center}
    \begin{small}
    \begin{adjustbox}{width=\linewidth}
    \begin{tabular}{|c|c|c|c|}
    \hline
    Model Size & TP & PP & \makecell{Accumulation\\ Step} \\ \hline
    1.3B & 1 & 1 & 1 \\ \hline
    2.7B & 1 & 1 & 1 \\ \hline
    6.7B & 4 & 1 & 1 \\ \hline
    13B & 4/8 & 2/1 & 1 \\ \hline
    18B & 4/8 & 2/1 & 1 \\ \hline
    \end{tabular}
    \end{adjustbox}
    \end{small}
    \end{center}
\end{minipage}
\end{table}

\begin{table}[h]
\centering
\begin{minipage}[b]{0.65\textwidth}
\caption{E2E Convergence Training Parameters}
\label{tab:e2e_params}
    \begin{center}
    \begin{small}
    \begin{adjustbox}{width=\linewidth}
    \begin{tabular}{|c|c|c|c|c|c|}
    \hline
    Model Size & Learning Rate & Betas & Epsilon & Weight Decay & Batch Size \\ \hline
    125M & 6e-4 & 0.9, 0.95 & 1e-8 & 0.1 & 256 \\ \hline
    350M & 3e-4 & 0.9, 0.95 & 1e-8 & 0.1 & 256 \\ \hline
    1.3B & 2e-4 & 0.9, 0.95 & 1e-8 & 0.1 & 256 \\ \hline
    6.7B & 12e-5 & 0.9, 0.95 & 1e-8 & 0.1 & 256 \\ \hline
    \end{tabular}
    \end{adjustbox}
    \end{small}
    \end{center}
\end{minipage}
\end{table}

\newpage
\clearpage

\end{document}